\newcommand{\myol}[2][3]{{}\mkern#1mu\overline{\mkern-#1mu#2}}
\newcommand{\utwi}[1]{\mbox{\boldmath $ #1$}}
\newcommand{\ba}{{\utwi{a}}}
\newcommand{\bb}{{\utwi{b}}}
\newcommand{\bp}{{\utwi{p}}}
\newcommand{\bq}{{\utwi{q}}}
\newcommand{\bu}{{\utwi{u}}}
\newcommand{\bv}{{\utwi{v}}}
\newcommand{\bw}{{\utwi{w}}}
\newcommand{\bx}{{\utwi{x}}}
\newcommand{\by}{{\utwi{y}}}
\newcommand{\bz}{{\utwi{z}}}
\newcommand{\bA}{{\utwi{A}}}
\newcommand{\bC}{{\utwi{C}}}
\newcommand{\bD}{{\utwi{D}}}
\newcommand{\bE}{{\utwi{E}}}
\newcommand{\bP}{{\utwi{P}}}
\newcommand{\bQ}{{\utwi{Q}}}
\newcommand{\bU}{{\utwi{U}}}
\newcommand{\bV}{{\utwi{V}}}
\newcommand{\bW}{{\utwi{W}}}
\newcommand{\bX}{{\utwi{X}}}
\newcommand{\bY}{{\utwi{Y}}}
\newcommand{\bbx}{\bar\bx}
\newcommand{\bbA}{\myol{\bA}}
\newcommand{\bbU}{\bar\bU}
\newcommand{\bbV}{\bar\bV}
\newcommand{\bLambda}{{\utwi{\Lambda}}}
\newcommand{\bSigma}{{\utwi{\Sigma}}}
\newcommand{\bbX}{\myol{\bX}}
\newtheorem{definition}{Definition}[section]
\newtheorem{theorem}{Theorem}
\newtheorem{lemma}{Lemma} 
\newtheorem{proposition}{Proposition}
\newtheorem{assumption}{Assumption}
\newtheorem{example}{Example}
\newtheorem{remarkx}{Remark}
\newenvironment{remark}{\begin{remarkx} \em}{\end{remarkx}}
\newtheorem{condx}{Condition}
\renewcommand{\hat}{\widehat}
\DeclarePairedDelimiterX{\norm}[1]{\lVert}{\rVert}{#1}   
\newcommand{\E}[1]{{\rm E} \left\{ #1 \right\}}    
\newcommand{\rank}[1]{{\rm rank} \left( #1 \right)}
\newcommand{\vect}[1]{{\textsc{vec}} \left( #1 \right)}
\newcommand{\mat}[1]{{\textsc{mat}} \left( #1 \right)}
\newcommand{\diag}[1]{{\rm diag} \left( #1 \right)}
\newcounter{question}
\newcounter{misc}
\newcommand{\mybibsty}{chicago}
\newcommand{\mybibfile}{lrpca}
\newcommand{\blind}{0}
\begin{document}

\if0\blind
{
	\title{\bf  Low-Rank Principal Eigenmatrix  Analysis}
	\author[1]{Krishna Balasubramanian}
	\author[2]{Elynn Y. Chen 
	}
	\author[3]{Jianqing Fan}
	\author[4]{Xiang Wu}
	\affil[1]{Department of Statistics, The University of California, Davis}
	\affil[2,3]{Department of Operations Research and Financial
		Engineering, Princeton University}
	\affil[4]{Google AI.}
	\date{\vspace{-5ex}}
	\maketitle
} \fi

\if1\blind
{
	\bigskip
	\bigskip
	\bigskip
	\title{\bf Low-Rank Principal Eigenmatrix  Analysis }
	\author{}
	\date{\vspace{-5ex}}
	\maketitle
	\medskip
} \fi

\bigskip

\maketitle

\begin{abstract}
	Sparse PCA is a widely used technique for high-dimensional data analysis. In this paper, we propose a new method called low-rank principal eigenmatrix analysis. Different from sparse PCA, the dominant eigenvectors are allowed to be dense but are assumed to have low-rank structure when matricized appropriately. Such a structure arises naturally in several practical cases: Indeed the top eigenvector of a circulant matrix, when matricized appropriately is a rank-1 matrix.  We propose a matricized rank-truncated power method that could be efficiently implemented and establish its computational and statistical properties. Extensive experiments on several synthetic data sets demonstrate the competitive empirical performance of our method. 
\end{abstract}

\section{Introduction}
\label{sec:intro}
Principal Component Analysis (PCA) is a popular technique for analyzing big data with a rich theoretical literature, along with several real-world applications in analyzing financial data, matrix completion, network analysis, gene expression analysis, to name a few. We refer the reader to~\cite{izenman2008modern,jolliffe2011principal,fan2018principal} for a recent survey. Given a $d \times d$ symmetric positive semidefinite matrix $\bA$, PCA finds the unit vector $\bx \in \mathbb{R}^d$, also called as the Principal Component (PC), that maximizes the quadratic form $\bx^\top \bA \bx$. In statistical machine learning, $\bA$ is often taken to be the sample covariance matrix of a $d$-dimensional random vector, estimated with $n$ sample points. Several theoretical results concerning the consistency and rates of convergence of the estimated principal components are well-known in the setting when $d$ is fixed and $n \to \infty$. See for example~\cite{anderson1963asymptotic, muirhead2009aspects}.

On the other hand, contemporary datasets often have the number of input variables ($d$) comparable with or even much larger than the number of samples ($n$). Asymptotic properties of the PCs under the setting that $n \to \infty$ such that $d/n \to c \in (0,\infty)$ have been studied by several authors; See for example~\cite{nadler2008finite, jung2009pca, johnstone2012consistency, johnstone2018pca}. At the risk of sounding non-rigorous, the main conclusion of that line of work is that the estimated PC is not a consistent estimator of the true PC in this setting. Such a concern lead to the development of \textit{sparse PCA}~\cite{zou2006sparse}, where it is assumed that only $s$ of the $d$ components in the unit vector $\bx$ are non-zero. Such a technique, improves the interpretation of the PC and alleviates some of the problems associated with the consistency of PCA in high-dimensional low-sample size scenario~\cite{birnbaum2013minimax, vu2012minimax,vu2013minimax,cai2013sparse}. But the question, \emph{when are the PCs of a covariance matrix truly sparse?}, is not well understood. So far, the main motivation for sparse PCA has been the fact that high-dimensional estimation is possible only under the structural sparsity assumptions, which is rather unconvincing. To the best of our knowledge,~\cite{lei2015sparsistency} is the only work that analyzes sparse PCA in an agnostic setting, i.e., when the true PC is not assumed to be sparse.

In this paper we propose a novel method called Low-Rank Principal Eigenmatrix Analysis, where the PC is assumed to be \emph{low-rank} when \emph{matricized} appropriately. A main motivation for proposing such a structure is from the case of circulant and Toeplitz covariance matrices that arises in the context of stochastic processes used to model financial applications and medical imaging dataset~\cite{christensen2007algorithm, snyder1989use, roberts2000hidden, cai2013optimal}. Consider the case when $d=p^2$. It is known that circulant matrices have FFT matrices as eigenvectors~\cite{gray2006toeplitz, davis2012circulant}. 


\begin{example}[Circulant Matrix]
Theorem 3.1 in \cite{gray2006toeplitz} shows that every circulant matrix $\bC \in \mathbb{R}^{p^2 \times p^2}$ has eigenvectors of the form $\by = (1, \rho, \cdots, \rho^{p^2-1})^\top \in \mathbb{R}^{p^2}$. If we reshape the vector $\by$ according to Definition~\ref{dfn:mat}, then the matrix $\mat{\by} \in \mathbb{R}^{p \times p}$ is an exactly rank one matrix, i.e. $\mat{\by} = \bu \bv^\top$ where $\bu=\begin{pmatrix}1&\rho&\cdots&\rho^p\end{pmatrix}^\top$ and $\bv=\begin{pmatrix}1&\rho^{p+1}&\cdots&\rho^{p(p-1)+1}\end{pmatrix}^\top$.
\end{example}

\begin{example}[Toeplitz, Diagonally Dominant and Kronecker Structred Covariance Matrices]
In Figure~\ref{fig:justification}, we plot the spectrum of the matricized version of top eigenvector of a Toeplitz, diagonally dominant, kronecker-product structured covariance matrices~\cite{werner2008estimation}, along that of a unstructured covariance matrix. The dimensions of the covariance matrix is  $10000 \times 10000$. The top eigenvector is hence of dimension $10000$. The reshaping according to Definition~\ref{dfn:mat}, leads to a $100 \times 100$ matrix. We see that, for the case of Toeplitz matrices and Diagonally Dominant matrices, the top eigenmatrix could almost always be well-aproximated by an extremely low-rank matrix. Furthermore, for covariance matrices with Kronecker product structure, the rank of the top eigenmatrix is well approximated by a low-rank approximation: for a $10000$ dimensional principal component represented as a $100 \times 100$ matrix a rank-40 approximation captures the bulk. Note that this obviously does not hold in the case of general covariance matrices.
\end{example}
A noteworthy observation in the above examples is that while the reshaped eigenvectors low-rank, they are never sparse. Thus in these cases, while the sparsity assumption on the eigenvector might appear rather superficial, the low-rank assumption appears naturally. This observation serves as a motivation for the low-rank model proposed in this work. 
\begin{figure*}[ht]\label{fig:justification}
\centering
\begin{minipage}{0.5\textwidth}
\centering
\begin{tikzpicture}[scale=0.70]
  \begin{axis}[
  xlabel = $\text{Index of Singular Values}$,
ylabel=$\text{Normalized Magnitude of Singular Values}$,
xmax = 10,
xmin = 1,
ymax = 1,
ymin = 0
]
\addplot+[error bars/.cd,
y dir=both,y explicit]
 coordinates {
( 1, 1 )+- (0.0, 0.07)
( 2, 0.03 )+- (0.0, 0.05)
( 3, 0.02 )+- (0.0, 0.03)
( 4, 0.00 )+- (0.0, 0.00)
( 5, 0.00 )+- (0.0, 0.00)
( 6, 0.00 )+- (0.0, 0.00)
( 7, 0.00 )+- (0.0, 0.00)
( 8, 0.00 )+- (0.0, 0.00)
( 9, 0.00 )+- (0.0, 0.00)
( 10, 0.00 )+- (0.0, 0.00)
};  
\end{axis}
\end{tikzpicture}
\end{minipage}\hfill
\begin{minipage}{0.5\textwidth}
\centering
\begin{tikzpicture}[scale=0.70]
  \begin{axis}[
 xlabel = $\text{Index of Singular Values}$,
ylabel=$\text{Normalized Magnitude of Singular Values}$,
xmax = 10,
xmin = 1,
ymax = 1,
ymin = 0
]
\addplot+[error bars/.cd,
y dir=both,y explicit]
 coordinates {
( 1, 1 )+- (0.0, 0.07)
( 2, 0.06 )+- (0.0, 0.06)
( 3, 0.03 )+- (0.0, 0.05)
( 4, 0.02 )+- (0.0, 0.03)
( 5, 0.00 )+- (0.0, 0.00)
( 6, 0.00 )+- (0.0, 0.00)
( 7, 0.00 )+- (0.0, 0.00)
( 8, 0.00 )+- (0.0, 0.00)
( 9, 0.00 )+- (0.0, 0.00)
( 100, 0.00 )+- (0.0, 0.00)
};  
\end{axis}
\end{tikzpicture}
\end{minipage}\hfill
\begin{minipage}{0.5\textwidth}
\centering
\begin{tikzpicture}[scale=0.70]
  \begin{axis}[
  xlabel = $\text{Index of Singular Values}$,
ylabel=$\text{Normalized Magnitude of Singular Values}$,
xmax = 100,
xmin = 1,
ymax = 1,
ymin = 0
]
\addplot+[error bars/.cd,
y dir=both,y explicit]
 coordinates {
( 1, 1 )+- (0.0, 0.007)
( 20, 0.83 )+- (0.0, 0.06)
( 30, 0.43 )+- (0.0, 0.05)
( 40, 0.22 )+- (0.0, 0.03)
( 50, 0.00 )+- (0.0, 0.04)
( 60, 0.00 )+- (0.0, 0.00)
( 70, 0.00 )+- (0.0, 0.00)
( 80, 0.00 )+- (0.0, 0.00)
( 90, 0.00 )+- (0.0, 0.00)
( 100, 0.00 )+- (0.0, 0.00)
};  
\end{axis}
\end{tikzpicture}
\end{minipage}\hfill
\begin{minipage}{0.5\textwidth}
\centering
\begin{tikzpicture}[scale=0.70]
  \begin{axis}[
  xlabel = $\text{Index of Singular Values}$,
ylabel=$\text{Normalized Magnitude of Singular Values}$,
xmax = 100,
xmin = 1,
ymax = 1,
ymin = 0
]
\addplot+[error bars/.cd,
y dir=both,y explicit]
 coordinates {
( 1, 1 )+- (0.0, 0.007)
( 20, 0.86 )+- (0.0, 0.06)
( 30, 0.73 )+- (0.0, 0.05)
( 40, 0.62 )+- (0.0, 0.03)
( 50, 0.45 )+- (0.0, 0.04)
( 60, 0.40 )+- (0.0, 0.03)
( 70, 0.30 )+- (0.0, 0.03)
( 80, 0.20 )+- (0.0, 0.03)
( 90, 0.10 )+- (0.0, 0.03)
( 100, 0.01 )+- (0.0, 0.00)
};  
\end{axis}
\end{tikzpicture}
\end{minipage}\hfill
\caption{Singular values of  top eigenmatrix of a mean zero multivariate Gaussian random variable generated with the specified covariance structure, averaged over $100$ instance. Top Left:  Toeplitz Covariance Matrix. Top Right: Diagonally Dominant Covariance matrix. Bottom Left: Kronecker Covariance. Bottom Right: General PSD covariance matrix. Notice also the difference in the scale of x-axis. Here, by normalized, we refer to normalizing by the largest singular value.}
\end{figure*}
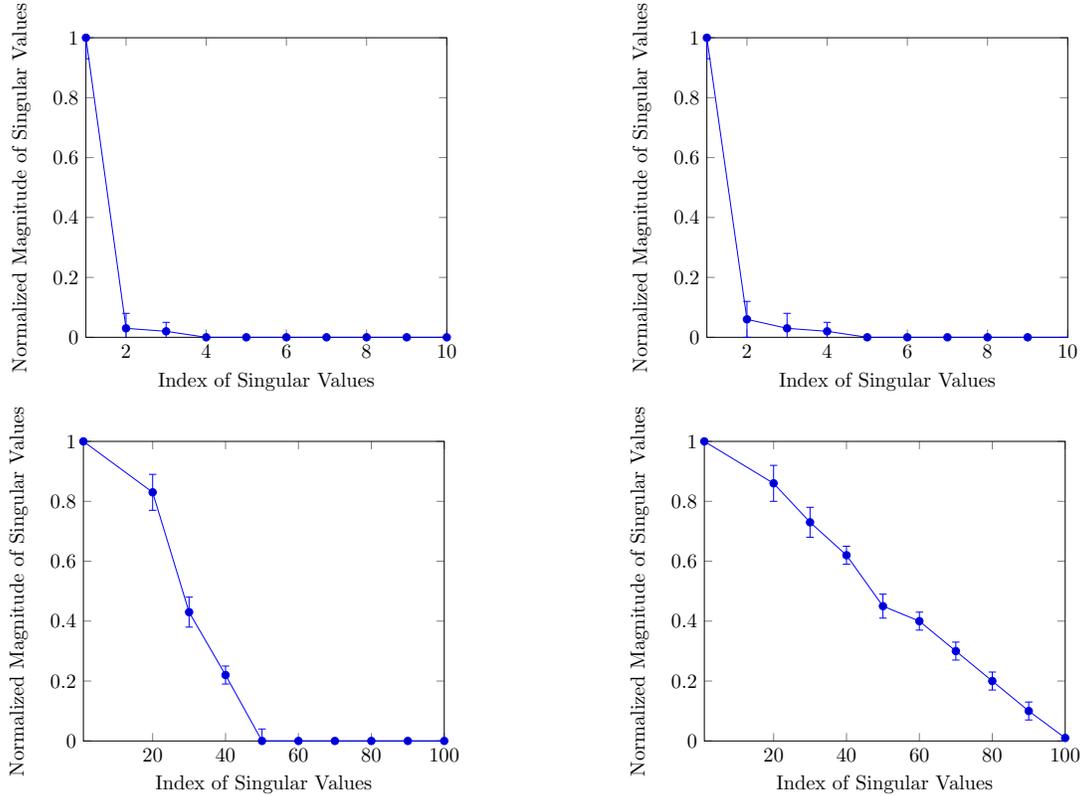

\noindent\textbf{Organization:} The remaining of this paper is organized as follows: \S\ref{sec:modelmain} describes the low-rank principal eigenmatrix problem and proposes a iterative algorithm for computing the estimator. \S\ref{sec:MatRankTrunc} analyzes the convergence rates of the proposed algorithm. \S\ref{sec:empirical} provides numerical results that confirm the relevance of our theoretical predictions. We end this section with the list of notations we follow in the rest of the paper.

\noindent\textbf{Notation:} We use bold-faced letter $\by$ to denote vectors, and bold-faced capital letters such as $\bA$ to denote matrices. For a vector $\by$, $\norm{\by}_{\ell_0}$ and $\norm{\by}_{\ell_2}$ represents the $L_0$ and $L_2$ norms respectively. For any matrix $\bX \in \mathbb{R}^{d_1 \times d_2}$, we denote its singular values by $s_1(\bX) \ge \cdots \ge s_d(\bX) \ge 0$ where $d=\min(d_1, d_2)$. The inner product in $\bX \in \mathbb{R}^{d_1 \times d_2}$ can be written in matrix form as $\langle \bX, \bY \rangle = \mathbf{Tr}(\bX^\top \bY)$. Let $\norm{\bX}_F$, $\norm{\bX}$ and $\norm{\bX}_{\star}$ denote the Frobenius, operator and nuclear norm of $\bX$, respectively. Let $\mathbb{S}^{d \times d}$ denote the set of symmetric matrices. For any matrix $\bA \in \mathbb{S}^{d \times d}$, we denote its eigenvalues by $\lambda_{\max}(\bA) = \lambda_1(\bA) \ge \cdots \ge \lambda_d(\bA) = \lambda_{\min}(\bA)$. Furthermore, we call the eigenvector associated with the top eigenvalue $\lambda_{\max}(A)$ as the top eigenvector. We use $\rho(\bA)$ to denote the spectral norm of $\bA$, i.e. $\max\{|\lambda_{\max}(\bA)|,|\lambda_{\min}(\bA)|\}$. In the rest of the paper, we define $Q(\bx) \coloneqq \bx^\top \bA \bx$, $P_U$ as the projection matrix onto the column space of matrix $\bU$, and $\bA_{UV} = P_{V\otimes U} \bA P_{V\otimes U}$. We next define \textit{vectorization} of a matrix and \textit{matricization} of a vector in Definition \ref{dfn:vect} and \ref{dfn:mat} respectively.

\begin{definition}[Vectorization] \label{dfn:vect}
For a matrix $\bX \in \mathbb{R}^{p \times p}$, we define the vectorization of the matrix  $\vect{\bX} = \mathbb{R}^{p^2}$ to be a vector constructed by stacking columns of $\bX$ together.
\end{definition}

\begin{definition}[Matricization] \label{dfn:mat}
Given a vector $\bx \in \mathbb{R}^{p^2}$, we define the matricization of a vector by $\mat{\bx}: \mathbb{R}^{p^2} \mapsto \mathbb{R}^{p \times p}$ where the first $p$ coordinates of the vectors form the first column of the matrix and second $p$ indices form the second column and so on. Hence $\bX_{i,j} = \bx_{(j-1)p +i}$.
\end{definition}

Note that the above operations could also be defined based on rows of the matrix $\bX$. In the rest of the paper, we only refer to Definition~\ref{dfn:vect} and \ref{dfn:mat} for the above operations. Furthermore, in the rest of the paper, we fix $d=p^2$, for the sake of simplicity. We note that there is nothing special about $\bA$ being square and all of our discussion would apply to arbitrary rectangular matrices as well. The advantage of focusing on square matrices is a simplified exposition and reduction in the number of parameters of which we need to keep track. In the case of $d$ not being a perfect square, Definition~\ref{dfn:mat} could be changed to handle rectangular matrices.


\section{Low-Rank Principal Eigenmatrix Analysis} \label{sec:modelmain}

In this section, we first introduce the Low-rank Principal Eigenmatrix Analysis problem in \S\ref{sec:model} and then introduce the matricized rank-truncated power method for solving the problem in \S\ref{sec:MatRankTrunc}.
\subsection{The Model and The Problem}\label{sec:model}
Consider the following noisy ($d \times d$) matrix model:
\begin{equation} \label{eqn:perturbation}
\bA = \bbA + \bE,
\end{equation}
where $\bA$ is a noisy observation of the true signal matrix $\bbA$ with the noise matrix represented by the matrix $\bE$. As a motivating application, we consider $\bA$ to be the empirical covariance matrix and $\bbA$ to the true covariance matrix. But the above model is general and applies to several other situations, for example, $\bA$ could be the noisy version of the true image $\bbA$, represented in term of pixel arrays. The problem of PCA considers estimating the top eigenvector, $\bbx$, of the matrix $\bbA$ based on the observed matrix $\bA$. Sparse PCA assumes $\bbx$ is $s$-sparse and estimates it by following constrained quadratic maximization problem:
\begin{equation*}
\begin{aligned}
& \underset{\bx \in \mathbb{R}^d}{\text{maximize}} & & \bx^\top \bA \bx \\
& \text{subject to} & & \|\bx\|_{\ell_2} =1, \; \|\bx\|_{\ell_0} \leq s.
\end{aligned}
\end{equation*}

Low-rank Principal Eigenmatrix Analysis assumes that the top eigenvector $\bbx$ of $\bA$, when matricized as $\bbX$, according to the Definition~\ref{dfn:mat}, has low-rank. Note that, we are still estimating the top eigenvector. But the low-rank structure is made on the matricized eigenvector $\bbX$. Hence we call it the eigenmatrix, i.e., $\bbX = \mat{\bbx}$ is the top \textit{eigenmatrix} of $\bA$. Furthermore, though $\bbX$ is a low-rank matrix, the eigenvector $\bbx =\vect{\bbX}$ can be dense, different from sparse PCA. Before proceeding, we make the following remarks. First, as discussed in~\S\ref{sec:intro}, our assumption $d=p^2$ is done only for the sake of convenience. In the case of general $d$, one could assume that there exist a $p_1 \times p_2$ rectangular matricization for which such a low-rank structure exists. Next, note that one could assume a more general model, where the vector $\bbx$, after a permutation of its indices followed by the matricization operation has a low-rank. Further, the permutation could be estimated as well. In this work, we do not concentrate on the general model, but we plan to address this in the future. Our goal in this work is to provide a deeper understanding of the  simpler model.

Based on our assumption, the \textit{Low-Rank Principal Eigenmatrix Estimator} is given by the following optimization problem (\ref{eqn:matricized_low_rank}):
\begin{equation} \label{eqn:matricized_low_rank}
\begin{aligned}
& \underset{\bX \in \mathbb{R}^{p \times p}}{\text{maximize}} & & \vect{\bX}^\top \bA \, \vect{\bX} \\
& \text{subject to} & & \|\bX\|_{F} = 1, \\
&&& \rank{\bX}  \leq k\ll p.
\end{aligned}
\end{equation}
Note that the above optimization problem is the natural maximum likelihood solution for estimating the top eigenmatrix $\bbX$ along with the constraints motivated by our assumption.  The above problem is a non-convex problem -- indeed the presence of the rank constraint along with the normalization constraint makes it highly non-convex. In this next section, we propose an iterative method for solving the above optimization problem.


\subsection{\textsc{smart-pm} for Solving Problem~\ref{eqn:matricized_low_rank}} \label{sec:MatRankTrunc}

In general, problem (\ref{eqn:matricized_low_rank}) is non-convex and NP-hard. In order to solve it efficiently, we propose a \textit{Sequentially MATricized Rank Truncated-Power Method (\textsc{smart-pm})} outlined in Algorithm~\ref{alg:mlrpower}. It is an iterative procedure based on the standard power method for eigenvalue problems, while maintaining the desired matricized low-rank structure for the intermediate solutions. 
\begin{definition}
We define Matricized Rank-Truncation operator as following, 
$$RankTrunc(\bX, k) = P_{U} \bX  P_{V}$$ for a matrix $\bX$  
and $$RankTrunc(\bx, k) = \vect{RankTrunc(\mat{\bx}, k)}$$ for a vector $\bx$, where $U$ and $V$ consists of the first $k$ columns of the left and right singular matrices of $\bX$, respectively. $P_{U}$ and $P_{V}$ are rank-$k$ projection matrix onto $U$ and $V$, respectively. For a matrix, operation $RankTrunc(\bX, k)$ is basically the truncated SVD that retains top $k$ singular values and sets the rest to zero.
\end{definition}

Given an initial approximation $\bX_0$ and rank $k$, Algorithm~\ref{alg:mlrpower} generates a sequence of intermediate low-rank eigenmatrices $\bX_1, \bX_2, \ldots$ satisfying $\norm{\bX_t}_{F}=1$ and $\rank{\bX_t} \le k$. At each iteration, the computational complexity is in $O(p^4 + p^2 k)$ which is $O(d^2)=O(p^4)$ for matrix-vector product $\bA \vect{\bX_{t-1}}$ and $O(p^2 k)$ for getting largest $k$ singular values and corresponding singular vectors. The extra computation time $O(p^2k)$ is negligible since $k$ is often small and $O(p^4)$ is often the dominant term. In the next section we analyze the \textsc{smart-pm} algorithm and discuss its rates of convergence.

\begin{algorithm}[th!]
	\caption{\textsc{smart-pm}}
	\label{alg:mlrpower}
	\begin{algorithmic}
		\STATE {\bfseries Input:} $\bA \in \mathbb{R}^{p^2 \times p^2}$ and an initial matrix $\bX_0 \in \mathbb{R}^{p \times p}$. 
		\STATE {\bfseries Output:}  $\bX_t \in \mathbb{R}^{p \times p}$.
		\STATE {\bfseries Parameters:}  Rank of the eigenmatrix $k \in \{1, \dots, p\}$
		Let $t = 1$. 
		\REPEAT
		\STATE Compute $\bX'_t = \mat{\frac{\bA \, \vect{\bX_{t-1}}}{\|\bA \, \vect{\bX_{t-1}}\|_{\ell_2}}}$
		\STATE Compute $\bX_t = RankTrunc(\bX'_t, k)$.
		\STATE Normalize $\bX_t = \frac{\bX_t}{\|\bX_t\|_{F}}$.
		\STATE $t = t+1$
		\UNTIL{Convergence.}
	\end{algorithmic}
\end{algorithm}

\section{Theoretical Analysis of \textsc{smart-pm}}
In this section, we provide a computational and statistical convergence analysis of the \textsc{smart-pm} algorithm. Our first result is on the algorithmic convergence of  \textsc{smart-pm}.  Specifically, we show that when all the rank $4k^2$ projections of the matrix $\bA$ are positive semi-definite, the iterates of the \textsc{smart-pm} algorithm are monotonically increasing in terms of the objective value it is optimizing. 
\begin{proposition}
If all the rank $4k^2$ projections of the matrix $\bA$ are positive semi-definite, then the sequence $\{ Q( \vect{\bX_t} ) \}_{t \ge 1}$ is a monotonically increasing sequence, where $\bX_t$ is obtained from the SMART-PM algorithm.  
\end{proposition}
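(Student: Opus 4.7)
The plan is to combine an Eckart--Young variational identity for the matricized rank-truncation step with a Cauchy--Schwarz argument inside a carefully chosen low-dimensional subspace on which $\bA$ is certified PSD. Throughout I write $\bx_\tau \coloneqq \vect{\bX_\tau}$ for the vectorized iterates; by construction each $\bX_\tau$ has unit Frobenius norm and rank at most $k$, so $\bx_\tau$ is a feasible point of the constraint set $\{\bw : \|\bw\|_{\ell_2}=1,\ \rank{\mat{\bw}}\le k\}$ for every $\tau\ge 1$.

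The first step would be to establish the variational identity
\[
\bx_t \;=\; \arg\max \bigl\{\, \bw^\top \bA\, \bx_{t-1} \;:\; \|\bw\|_{\ell_2}=1,\; \rank{\mat{\bw}}\le k \,\bigr\}.
\]
This holds because the normalized rank-$k$ truncation produced by Algorithm~\ref{alg:mlrpower} is exactly the best unit-Frobenius-norm rank-$k$ approximation of $\mat{\bA \bx_{t-1}}$, and by von Neumann's trace inequality combined with Cauchy--Schwarz on singular values, that best approximation is precisely the maximizer of $\langle \bY, \mat{\bA \bx_{t-1}}\rangle$ over unit-Frobenius-norm rank-$k$ $\bY$. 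Since $\bx_{t-1}$ is itself feasible in this optimization, I would immediately deduce
\[
\bx_t^\top \bA\, \bx_{t-1} \;\ge\; \bx_{t-1}^\top \bA\, \bx_{t-1} \;=\; Q(\bx_{t-1}).
\]

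Next I would pin down the subspace on which the PSD hypothesis will be invoked. Letting $\bU_{t-1}, \bV_{t-1}$ and $\bU_t, \bV_t$ denote the $p \times k$ matrices of top-$k$ left/right singular vectors from the truncations producing $\bX_{t-1}$ and $\bX_t$, I would pick orthonormal bases $\bar{\bU}$ and $\bar{\bV}$ of width at most $2k$ for the column spans of $[\bU_{t-1}, \bU_t]$ and $[\bV_{t-1}, \bV_t]$, and set $S \coloneqq \mathrm{range}(\bar{\bV} \otimes \bar{\bU})$. Then $\dim S \le (2k)(2k) = 4k^2$, and $S$ contains both $\bx_{t-1}$ and $\bx_t$ because each vectorizes a rank-$k$ matrix whose column and row spaces already lie in $\mathrm{range}(\bar{\bU})$ and $\mathrm{range}(\bar{\bV})$. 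Writing $P_S$ for the orthogonal projection onto $S$, the hypothesis (applied to any rank-$4k^2$ enlargement of $P_S$) certifies $\bA_S \coloneqq P_S \bA P_S \succeq 0$.

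To close, I would use Cauchy--Schwarz in the PSD form $\bA_S$. Since $P_S \bx_{t-1} = \bx_{t-1}$ and $P_S \bx_t = \bx_t$, the substitutions $Q(\bx_{t-1}) = \bx_{t-1}^\top \bA_S \bx_{t-1}$, $Q(\bx_t) = \bx_t^\top \bA_S \bx_t$, and $\bx_t^\top \bA \bx_{t-1} = \bx_t^\top \bA_S \bx_{t-1}$ are all valid. The PSD Cauchy--Schwarz inequality then yields $(\bx_t^\top \bA\, \bx_{t-1})^2 \le Q(\bx_t)\,Q(\bx_{t-1})$, while the variational step gave $\bx_t^\top \bA\, \bx_{t-1} \ge Q(\bx_{t-1}) \ge 0$. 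Squaring the latter and dividing by $Q(\bx_{t-1})$ (with the edge case $Q(\bx_{t-1})=0$ handled directly using $Q(\bx_t)\ge 0$) delivers the monotonicity $Q(\bx_t) \ge Q(\bx_{t-1})$. The main conceptual obstacle is the variational identity of the first step, which is the non-trivial structural fact that makes the previous iterate a legal competitor for the current one despite the non-convex rank constraint; the secondary subtlety is recognizing that the \emph{combined} singular subspaces across two consecutive iterates live naturally in a $(2k)(2k)=4k^2$-dimensional tensor-product subspace, which is exactly the object certified PSD by the hypothesis.
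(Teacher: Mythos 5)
Your proof is correct, and it shares the two load-bearing ingredients with the paper's argument: the variational identity that $\bx_t$ maximizes $\bw^\top \bA \bx_{t-1}$ over unit-norm $\bw$ with $\rank{\mat{\bw}} \le k$ (which you actually justify more carefully than the paper does, via von Neumann's trace inequality), and the observation that everything relevant lives in a tensor-product subspace of dimension at most $(2k)(2k)=4k^2$ built from the singular subspaces of two consecutive iterates, which is exactly where the hypothesis bites. Where you diverge is the closing step. The paper writes the exact second-order expansion $Q(\bx_t)-Q(\bx_{t-1}) = L(\bx_t,\bx_{t-1}) + (\bx_t-\bx_{t-1})^\top \bA (\bx_t-\bx_{t-1})$ with $L(\bx,\bx_{t-1}) = \langle 2\bA\bx_{t-1}, \bx-\bx_{t-1}\rangle$, notes that $\bx_t-\bx_{t-1}$ matricizes to a rank-$\le 2k$ matrix so the quadratic term is nonnegative under the hypothesis, and finishes with $L(\bx_t,\bx_{t-1}) \ge L(\bx_{t-1},\bx_{t-1}) = 0$; this is a minorize--maximize argument in which the linearization is a global underestimator of $Q$ on the feasible set. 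You instead run the classical power-method monotonicity proof: Cauchy--Schwarz for the PSD form $\bA_S$ gives $(\bx_t^\top \bA \bx_{t-1})^2 \le Q(\bx_t)Q(\bx_{t-1})$, which combined with $\bx_t^\top \bA \bx_{t-1} \ge Q(\bx_{t-1}) \ge 0$ yields the claim after division, with the $Q(\bx_{t-1})=0$ case handled separately (as you do). Both routes are rigorous and apply the PSD hypothesis to essentially the same subspace, since the singular subspaces of $\bX_t-\bX_{t-1}$ sit inside your $\bar{\bU},\bar{\bV}$; the paper's version avoids the division case split and needs only that the \emph{difference} of iterates is low-rank, while yours makes transparent that the result is the natural analogue of Rayleigh-quotient monotonicity for the ordinary power method on a PSD matrix.
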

\begin{proof}
Note that the iterate $\bx_t = \vect{\bX_t}$ in the \textsc{smart-pm} algorithm solves the following constrained optimization problem:
\begin{equation*} 
\begin{aligned}
& \underset{\bx}{\text{maximize}} & & L(\bx, \bx_{t-1}) \coloneqq \langle 2 \bA \bx_{t-1}, \bx - \bx_{t-1} \rangle  \\
& \text{subject to} & & \|\bx\|_{\ell_2} = 1, \\
&&& \rank{\bX}  \leq k < p.
\end{aligned}
\end{equation*}
For any $\textsc{rank}(\bX) \leq k$ and $\textsc{rank}(\bX_{t-1}) \leq k$, we have $\textsc{rank}(\bX - \bX_{t-1}) \leq 2k$. Suppose $\bX - \bX_{t-1}$ assumes the following form of SVD decomposition $\bX - \bX_{t-1} = \bU \bD_{2k} \bV^\top$ where $\bU$ and $\bV$ are left and right singular matrix, respectively, then $\bX - \bX_{t-1} = P_U (\bX - \bX_{t-1}) P_V$ and $\bx - \bx_{t-1} = P_{V \otimes U} (\bx - \bx_{t-1})$, where $P_U$ is the projection matrix onto the column space of matrix $\bU$. Since all the rank $4k^2$ projections of the matrix $\bA$ are positive semi-definite, we have 
\begin{align*}
& (\bx_t - \bx_{t-1})^T \, \bA \, (\bx_t - \bx_{t-1}) \\
& = (\bx_t - \bx_{t-1})^T \, P_{V \otimes U} \bA P_{V \otimes U} \, (\bx_t - \bx_{t-1}) \\
& \ge 0. 
\end{align*}
Clearly, 
\begin{align*}
&Q(\vect{\bX_t}) - Q(\vect{\bX_{t-1}})  \\
& = L(\bx_t, \bx_{t-1}) + (\bx_t - \bx_{t-1})^T \, \bA \, (\bx_t - \bx_{t-1}) \\
& \ge  L(\bx_t, \bx_{t-1}) \ge L(\bx_{t-1}, \bx_{t-1}) \\
& = 0.
\end{align*}
The second inequality holds because $\bx_t$ by definition maximizes $L(\bx, \bx_{t-1})$ at step $t$.
\end{proof}


The above results is a purely computational result justifying the use of \textsc{smart-pm} method as a heuristic for solving the non-convex problem in Equation~\ref{eqn:matricized_low_rank}. We next consider the general noisy matrix perturbation model~(\ref{eqn:perturbation}) and show that if matrix $\bbA$ has a unique low-rank (or approximately low-rank) top eigenmatrix, then under suitable condition, \textsc{smart-pm} can 
estimate this eigenmatrix from the noisy observation $\bA$, under certain conditions on the initial matrix $\bX_0$. In order to establish such a result, we first list the set of assumptions required, precisely. 

\begin{assumption} \label{assump:matricized_low_rank}
Assume that the largest eigenvalue of $\bbA \in \mathbb{S}^{p^2}$ is $\lambda = \lambda_{\max}(\bbA) > 0$ that is non-degenerate, with a gap $\triangle \lambda = \lambda - \max_{j>1} |\lambda_j(\bbA)|$ between the largest and the remaining eigenvalues. Moreover, assume that the eigenvector $\bar\bx \in \mathbb{R}^d$ corresponding to the dominant eigenvalue $\lambda$ is of rank $k$ when matricized, i.e. $\mat{\bar\bx} \in \mathbb{R}^{p \times p}$ is of rank $k \le p$.
\end{assumption}

\begin{assumption} \label{assump:E_no_matricized_low_rank}
Assume that $\bE \in \mathbb{S}^{p^2 \times p^2}$ does not have low-rank eigenmatrices. Mathematically, let $\bw$ be any eigenvector of $\bE$, $\|\bw\|_{\ell_2}^2 = 1$, and $\bW = \mat{\bw} \in \mathbb{R}^{p \times p}$ be the corresponding eigenmatrix. $\bW$ is of rank $p$ and $\max\{ |\sigma_1(\bW)|, |\sigma_p(\bW)| \} \asymp \sqrt{1/p}$.   
\end{assumption}
\begin{remark}
Assumption~\ref{assump:matricized_low_rank} posits that the top eigenvector (or the eigenmatrix) of $\bbA$ is isolated. Such an analysis is common in the analysis of power method. Assumption~\ref{assump:E_no_matricized_low_rank} is crucial and captures a particular conditions on the error matrix $\bE$ under which the~\textsc{smart-pm} has efficient estimation rates. It posits that the eigenmatrices of $\bE$ have are high-rank with a flat spectrum.  This assumption is complementary to the structural assumption made on the true signal to be estimated, i.e., top eigenmatrix $\bbX$ is low-rank. For example, if the true signal matrix $\bbA$ is corrupted by an additive standard Gaussian iid noise matrix $\bE$, then with high probability such an assumption holds naturally.
\end{remark}


Now we first show that, under Assumption \ref{assump:E_no_matricized_low_rank} when the noise matrix $\bE$ does not have low-rank eigenmatrices, the spectral norm of the low rank projection $\rho(P_{V \otimes U} \bE P_{V \otimes U})$ can be small, even when $\rho(\bE)$ is large. 
\begin{lemma} \label{lemma:E_low_rank_proj}
Consider low-rank projection $P_U$ and $P_V$ where $\rank{P_U} = \rank{P_V} = k$ and $1 \le k \le p$. For any matrix $\bE \in \mathbb{S}^{p^2 \times p^2}$ satisfying Assumption \ref{assump:E_no_matricized_low_rank}, $\rho(\bE_{UV}) = O(\sqrt{k/p}) \cdot \rho \left( \bE \right)$, where $\bE_{UV}=P_{V \otimes U} \bE P_{V \otimes U}$. 
\end{lemma}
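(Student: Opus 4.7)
The plan is to reduce $\rho(\bE_{UV})$ to a Frobenius-norm estimate that can be controlled term-by-term using the flat-spectrum hypothesis on the eigenvectors of $\bE$ provided by Assumption~\ref{assump:E_no_matricized_low_rank}.

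First, I would characterize $\rho(\bE_{UV})$ variationally. Writing $P := P_{V \otimes U}$,
\begin{equation*}
\rho(\bE_{UV}) \,=\, \sup_{\|\bz\|_{\ell_2}=1} |\bz^\top P \bE P \bz| \,=\, \sup_{\by = P\by,\, \|\by\|_{\ell_2}\le 1} |\by^\top \bE \by|.
\end{equation*}
By the Kronecker--vec identity $(\bB^\top \otimes \bA)\vect{\bX} = \vect{\bA \bX \bB}$, the constraint $\by = (P_V \otimes P_U)\by$ is equivalent to $\bY := \mat{\by}$ satisfying $\bY = P_U \bY P_V$, so $\bY$ has rank at most $k$ with $\|\bY\|_F \le 1$. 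Writing $\bE = \sum_i \lambda_i \bw_i \bw_i^\top$ in its eigenbasis and setting $\bW_i := \mat{\bw_i}$, Assumption~\ref{assump:E_no_matricized_low_rank} ensures that every singular value $\sigma_j(\bW_i) \asymp \sqrt{1/p}$, and in particular $\|\bW_i\| = O(\sqrt{1/p})$.

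Second, I would establish the key per-eigenvector bound on the projected mass,
\begin{equation*}
\|P \bw_i\|_{\ell_2}^2 \,=\, \|P_U \bW_i P_V\|_F^2 \,\le\, \rank{P_U \bW_i P_V} \cdot \|P_U \bW_i P_V\|^2 \,\le\, k \cdot \|\bW_i\|^2 \,=\, O(k/p),
\end{equation*}
using that $P_U \bW_i P_V$ has rank at most $k$ and that orthogonal projections do not increase the operator norm. These pointwise estimates combine into a Frobenius bound on $\bE_{UV}$ via $\|P\bA P\|_F \le \|P\bA\|_F$ (valid for any orthogonal projection $P$ and any matrix $\bA$):
\begin{equation*}
\|\bE_{UV}\|_F^2 \,\le\, \|P \bE\|_F^2 \,=\, \rmtr(P \bE^2) \,=\, \sum_i \lambda_i^2\, \|P \bw_i\|_{\ell_2}^2 \,\le\, O(k/p) \cdot \sum_i \lambda_i^2.
\end{equation*}
Since $\rho(\bE_{UV}) \le \|\bE_{UV}\|_F$, this gives $\rho(\bE_{UV}) \le O(\sqrt{k/p})\, \|\bE\|_F$. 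Under the noise models motivating Assumption~\ref{assump:E_no_matricized_low_rank} (see the remark preceding the lemma), the spectrum of $\bE$ is balanced enough that $\|\bE\|_F = O(\rho(\bE))$ in the regime of interest, yielding the claim $\rho(\bE_{UV}) = O(\sqrt{k/p})\, \rho(\bE)$.

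The main obstacle is this final conversion from $\|\bE\|_F$ back to $\rho(\bE)$: for a generic symmetric $\bE$ satisfying only Assumption~\ref{assump:E_no_matricized_low_rank}, $\|\bE\|_F/\rho(\bE)$ can be as large as $p$, and the Frobenius-only bound is then too weak. Tightening it seems to require working directly with the compressed matrix $M := (\bV_k \otimes \bU_k)^\top \bE (\bV_k \otimes \bU_k) \in \mathbb{R}^{k^2 \times k^2}$, where $\bU_k$ and $\bV_k$ are orthonormal bases of $\mathrm{range}(P_U)$ and $\mathrm{range}(P_V)$ respectively, and bounding $\rho(M) = \sup_{\|\bc\|_{\ell_2}=1} |\sum_i \lambda_i (\bc_i^\top \bc)^2|$ by exploiting simultaneously the completeness relation $\sum_i \bc_i \bc_i^\top = I_{k^2}$ and the pointwise constraint $\|\bc_i\|_{\ell_2}^2 \le O(k/p)$ for $\bc_i := (\bV_k \otimes \bU_k)^\top \bw_i$. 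This linear-programming-style argument is what ultimately limits how much mass can accumulate on the dominant eigenvalues of $\bE$ and is where the factor $\sqrt{k/p}$ (rather than the naive $k/p$ or $1$) must be extracted.
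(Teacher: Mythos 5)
Your opening moves coincide with the paper's: the variational reduction to rank-$k$ matricized test vectors and the per-eigenvector estimate $\norm{P_{V\otimes U}\bw_i}_{\ell_2}^2=\norm{P_U\bW_iP_V}_F^2\le k\,\norm{\bW_i}^2=O(k/p)$ are exactly how the paper begins. The genuine gap is in the aggregation step, and you have correctly diagnosed it yourself: routing everything through the Frobenius norm yields only $\rho(\bE_{UV})\le\norm{\bE_{UV}}_F\le O(\sqrt{k/p})\,\norm{\bE}_F$, which is off from the claimed bound by a factor of up to $\norm{\bE}_F/\rho(\bE)\sim p$. So the proposal as written does not prove the lemma. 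Moreover, the repair you sketch does not close the gap either: with $c_i:=(\bw_i^\top\by)^2$ the constraints you list are $\sum_i c_i=1$ and $c_i\le\norm{P_{V\otimes U}\bw_i}_{\ell_2}^2=O(k/p)$, and the linear program $\max|\sum_i\lambda_i c_i|$ under these constraints is solved by spreading mass $O(k/p)$ over the top $O(p/k)$ eigenvalues, giving value $\Theta(\rho(\bE))$ whenever $\bE$ has order $p/k$ eigenvalues comparable to $\rho(\bE)$. Assumption \ref{assump:E_no_matricized_low_rank} constrains only the eigenvectors of $\bE$, not its spectrum, so this LP-style argument caps out at the trivial bound $O(1)\cdot\rho(\bE)$; the factor $\sqrt{k/p}$ cannot be extracted from the pointwise column bounds alone.

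For comparison, the paper closes this step by factorizing $\bE_{UV}=\bQ\bLambda\bQ^\top$ with $\bQ=P_{V\otimes U}\bW$ ($\bW$ the orthogonal eigenvector matrix of $\bE$), bounding
\begin{equation*}
\rho^2(\bE_{UV})=\norm{\bQ\bLambda\bQ^\top\bQ\bLambda\bQ^\top}\le\norm{\bLambda\bQ^\top\bQ\bLambda}\cdot\norm{\bQ}^2\le\rho^2(\bE)\cdot\norm{\bQ}^2,
\end{equation*}
and then asserting $\norm{\bQ}^2=O(k/p)$ from the same column-norm estimates. Be aware that this is precisely the point your obstacle touches: the operator norm of $\bQ$ is not controlled by its maximal column norm (indeed $\bQ\bQ^\top=P_{V\otimes U}\bW\bW^\top P_{V\otimes U}=P_{V\otimes U}$, so $\norm{\bQ}=1$), so the paper's aggregation is itself the delicate step rather than a routine one. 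In short: your derivation is correct up to the weaker conclusion $O(\sqrt{k/p})\,\norm{\bE}_F$, the missing idea is exactly the one you flag, and neither the Frobenius route nor the proposed LP refinement supplies it.
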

\begin{proof}
Suppose $\bE = \sum_{i=1}^{p^2} \lambda_i \bw_i \bw_i^\top$ and $\norm{\bw_i}^2 = 1$. For $1 \le i \le p^2$, let $\bW_i = \mat{\bw_i}$ be the $i$-th eigenmatrices of $\bE$. Then 
\begin{align*}
& \norm[\Big]{P_U \bW_i P_V^\top}_F \le \sqrt{\rank{P_U \bW_i P_V^\top}} \, \norm[\Big]{P_U \bW_i P_V^\top} \\
& \qquad \le \sqrt{k} \norm[\Big]{\bW_i}  = O(\sqrt{k/p}).
\end{align*}
Then $\norm{P_{V \otimes U} \bw_i}_{\ell_2} = O(\sqrt{k/p})$. 

Let $\bW = \begin{pmatrix} \bw_1, \cdots, \bw_{p^2}\end{pmatrix}$, $\bE = \bW \bLambda \bW^\top$ and $\bQ = P_{V \otimes U} \bW$, we have $\norm{\bQ}_F^2= \norm{\bQ \bQ^\top}_\star = O(kp)$, $\norm{\bQ}^2 = O(k/p)$ and $P_{ V\otimes U} \bE \bW = \bQ \bLambda$. Thus,
\begin{align*}
& \rho^2(\bE_{UV}) = \norm{\bE_{UV}}^2 = \norm{\bQ \bLambda \bQ^\top \bQ \bLambda \bQ^\top} \\
& = \norm{\bQ \bLambda \bQ^\top \bQ \bLambda \bQ^\top} \\
& \le \norm{ \bLambda \bQ^\top \bQ \bLambda } \norm{ \bQ }^2 \\
& = \norm{ P_{ V\otimes U} \bE \bW }^2 \norm{ \bQ }^2 \\
& = O(k/p) \cdot \rho^2(\bE).
\end{align*}
Thus, we have 
\begin{align*}
\rho \left(\bE_{UV} \right) = O(\sqrt{k/p}) \cdot \rho \left( \bE \right).
\end{align*}
\end{proof}


We now state our main result as below, which shows that under appropriate conditions, the \textsc{smart-pm} algorithm can recover the low rank eigenmatrix. 

\begin{theorem} \label{thm:SMART-PM}
Assume that Assumption \ref{assump:matricized_low_rank} and \ref{assump:E_no_matricized_low_rank} hold. Let $1 \le \bar k \le k$. Assume that $\kappa \coloneqq \triangle \lambda / \rho(\bE_{UV}) > 2$ where $\bE_{UV}$ is defined in Lemma \ref{lemma:E_low_rank_proj} for any rank-$k$ matrices $U$ and $V$. Define
\begin{align} \label{eqn:eigen_gap_A}
\gamma \coloneqq \frac{\lambda - \triangle \lambda + \rho \left( \bE_{UV} \right)}{\lambda - \rho \left( \bE_{UV} \right)},
\end{align}
and 
\begin{align}  \label{eqn:err_from_sig_noise}
\delta \coloneqq \frac{\sqrt{2}}{\sqrt{1 + \left(\kappa - 2 \right)^2}}.
\end{align}
If $|\langle \bX_0,  \bbX \rangle| \ge \theta + \delta$ for some $\bX_0$ with $\rank{\bX_0}=k$ and $\norm{\bX_0}_F=1$ and $\theta \in (0,1)$ such that
\begin{equation*}
\resizebox{\hsize}{!}{$\mu \coloneqq \sqrt{1 + 2((\bar k / k)^{1/2} + \bar k / k)} \sqrt{1- 0.5 \theta (1+\theta) (1-\gamma^2)} < 1.$}
\end{equation*}
Then we either have
\begin{equation}
\sqrt{1 - |\langle \bX_0,  \bbX \rangle|} \le \sqrt{10} \delta /(1-\mu)
\end{equation}
or for all $t \ge 0$
\begin{equation}
\sqrt{1 - |\langle \bX_t,  \bbX \rangle|} \le \mu^t \sqrt{1 - |\langle \bX_0,  \bbX \rangle|} + \sqrt{10} \delta /(1-\mu).
\end{equation}
\end{theorem}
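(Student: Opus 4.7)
The strategy is to establish a one-step contraction
\[
\epsilon_t \;\le\; \mu\,\epsilon_{t-1} \;+\; c\,\delta
\]
for the error $\epsilon_t := \sqrt{1 - |\langle \bX_t, \bbX\rangle|}$, and then iterate. The initialization hypothesis $|\langle \bX_0, \bbX\rangle| \ge \theta+\delta$ places the first iterate in the basin of attraction where $\mu<1$, so unrolling the recursion yields the stated geometric decay $\epsilon_t \le \mu^t \epsilon_0 + \sqrt{10}\,\delta/(1-\mu)$; the alternative clause of the theorem simply covers the degenerate case where $\epsilon_0$ already lies below the noise floor. Two ingredients drive the recursion: (i) a projected-power contraction for the un-truncated iterate, and (ii) an approximation bound for the rank-$k$ truncation applied to a target of rank $\bar k \le k$.

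For (i), I would observe that any two rank-$k$ matrices such as $\bX_{t-1}$ and $\bbX$ together lie in the range of a projector $P_{V\otimes U}$ of rank at most $4k^2$, and the inner products of $\vect{\bX_{t-1}}$ and $\bbx := \vect{\bbX}$ with $\bA\vect{\bX_{t-1}}$ depend on $\bA$ only through $\bA_{UV} = \bbA_{UV} + \bE_{UV}$. By Lemma~\ref{lemma:E_low_rank_proj}, $\rho(\bE_{UV})$ is a factor $O(\sqrt{k/p})$ smaller than $\rho(\bE)$, so the assumption $\kappa>2$ ensures via Weyl and Davis--Kahan that $\bA_{UV}$ has a dominant eigendirection within $O(\delta)$ of $\bbx$ and that the associated power-method contraction rate is at most $\gamma$ from (\ref{eqn:eigen_gap_A}). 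A standard computation along the lines of the Yuan--Zhang analysis of perturbed power iterations then yields
\[
1 - |\langle \bw_t, \bbx\rangle| \;\le\; \bigl(1 - \tfrac{1}{2}\theta(1+\theta)(1-\gamma^2)\bigr)\bigl(1 - |\langle \vect{\bX_{t-1}}, \bbx\rangle|\bigr) \;+\; O(\delta^2),
\]
where $\bw_t := \bA\vect{\bX_{t-1}}/\|\bA\vect{\bX_{t-1}}\|_{\ell_2}$ is the un-truncated normalized iterate; this matches the second square root inside $\mu$.

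For (ii), I would exploit that $\bbX$ has rank $\bar k \le k$ and that the operator $RankTrunc(\cdot,k)$ returns the best Frobenius rank-$k$ approximation to its input (Eckart--Young). A quasi-Pythagorean bound on best low-rank approximation of a low-rank target then gives $1 - |\langle \bX_t, \bbX\rangle| \le \bigl(1 + 2((\bar k/k)^{1/2} + \bar k/k)\bigr)\bigl(1 - |\langle \mat{\bw_t}, \bbX\rangle|\bigr)$ after renormalization, which accounts for the first square-root factor in $\mu$. Multiplying the bounds from (i) and (ii) produces the one-step contraction, and the floor $\sqrt{10}\,\delta/(1-\mu)$ arises by summing the resulting geometric series of additive $\delta$-terms (the constant $\sqrt{10}$ coming from the $\sqrt{2}$ in \eqref{eqn:err_from_sig_noise} together with the constants collected across the two steps).

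The main obstacle will be carefully propagating the low-rank projection structure through both the power step and the truncation. The subspace $V\otimes U$ must be chosen adaptively at each iteration so that it contains $\vect{\bX_{t-1}}$, $\bbx$, and the column and row spans that realize the truncation of $\mat{\bw_t}$, while keeping its rank bounded by $4k^2$; Lemma~\ref{lemma:E_low_rank_proj} must then be invoked uniformly over this adaptive family, not for a single fixed $(U,V)$. Equally delicate is translating between the sine-distance, which is natural for the Davis--Kahan and power-method perturbation arguments, and the $\sqrt{1-|\langle\cdot,\cdot\rangle|}$ metric in which the conclusion is phrased, since the latter requires tracking both the magnitude and the sign of the running inner product $\langle \bX_t,\bbX\rangle$ through the normalization and truncation steps.
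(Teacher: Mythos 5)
Your proposal mirrors the paper's proof: the same decomposition into a perturbation bound for the projected matrix $\bA_{UV}$ (Weyl plus an eigenvector perturbation argument giving $\gamma$ and $\delta$), the classical one-step power-method improvement, and a rank-truncation loss bound producing the factor $\sqrt{1+2((\bar k/k)^{1/2}+\bar k/k)}$, all combined into the one-step recursion and unrolled as in Yuan--Zhang. Your identification of the adaptive choice of the projector containing $\bX_{t-1}$, $\bbX$, and the truncation subspaces as the delicate point is exactly what the paper handles by taking $\bU=\bU_{t-1}\cup\bU_t\cup\bbU$ and $\bV=\bV_{t-1}\cup\bV_t\cup\bbV$ in its Lemma~\ref{lemma:MatRankTrun_iter_better}.
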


\begin{proof}
We sketch the proof here, while the details are relegated to Appendix \ref{appdx:proof_SMART-PM} in supplementary material. The proof is carried out in the following steps. 
\begin{enumerate}
\item Lemma \ref{lemma:perturbation_rank_truncated} establishes the perturbation theory of symmetric eigenvalue problem for rank truncated $\bA_{UV} = \bbA_{UV} + \bE_{UV}$, i.e. bounds $\norm{\bx_1(\bA_{UV}) - \bbx}_{\ell_2}$, where $\bbA_{UV} = \lambda_{1} \bbx \bbx^\top$ and $\bx_1(\bA_{UV})$ is the eigenvector corresponding to the largest eigenvalue of $\bA_{UV}$.
\item Lemma \ref{lemma:rank_trunc_err} quantifies the error introduced by the rank-truncation step in the SMART-PM, i.e. $|\langle RankTrunc(\bX'_t, k),  \bbX \rangle| - |\langle \bX', \bbX \rangle |$.
\item Lemma \ref{lemma:MatRankTrun_iter_better} establishes that each step of the SMART-PM improves eigenvector estimation, i.e. $\sqrt{1 - |\langle \bX_t,  \bbX \rangle|}$ decreases geometrically with factor $\mu$.
\item Based on results from Lemma \ref{lemma:perturbation_rank_truncated}, \ref{lemma:rank_trunc_err}, and \ref{lemma:MatRankTrun_iter_better}, this step is similar to that of Theorem 4 in \cite{yuan2013truncated} and thus omitted in the Appendix \ref{appdx:proof_SMART-PM}.
\end{enumerate}
\vspace{-0.1in}
\end{proof}
\begin{remark}
For any fixed eigen-gap $\triangle \lambda$, if $\rho(\bE_{UV}) < \triangle \lambda / 2$, then $\gamma <  1$ and $\delta = O(1/\kappa) = O(\rho(\bE_{UV}))$. If $\gamma$ is sufficiently small, then the requirement that $\mu < 1$ can be satisfied for a sufficiently small $\theta$ of the order $\sqrt{\bar k / k}$. Theorem \ref{thm:SMART-PM} shows that under appropriate conditions, as long as we can find an initial $\bX_0$ such that 
$$| \langle \bX_0, \bbX \rangle | \ge C \left( \rho(\bE_{UV}) + \sqrt{\bar k / k} \right)$$ for some constant $C$, then $\sqrt{1 - |\langle \bX_t,  \bbX \rangle|}$ converges geometrically until 
$$\| \bX_t - \bbX \|_F = O \left(\rho(\bE_{UV})\right) = O \left(\sqrt{k/p} \, \rho(\bE) \right).$$
\end{remark}
Theorem \ref{thm:SMART-PM} thus provides strong theoretical justification of the proposed \textsc{smart-pm} algorithm. Specifically, the replacement of the full matrix perturbation error $\rho(\bE)$ with $\sqrt{k/p} \, \rho(\bE)$ gives theoretical insights on why SMART-PM gives superior results in Section \ref{sec:empirical}. To illustrate this further, we briefly describe a consequence of Theorem \ref{thm:SMART-PM} in the case, when $\bbA$ is corrupted by an iid standard Gaussian noise matrix $\bE$. In this case, $\rho(\bE) = O(\sqrt{p^2})$ with high-probability. The spectral norm of the low rank projection $\rho(P_{V \otimes U} \bE P_{V \otimes U}) = O(\sqrt{pk})$, which grows linearly in $\sqrt{pk}$ instead of $p$. Hence, we can run \textsc{smart-pm} with an appropriate initial vector to obtain an approximate solution $\bX_t$ with error $\| \bX_t - \bbX \|_F = O \left( \sqrt{pk} \right)$. Note that for estimating a $p \times p$ matrix with rank $k$, a simple parameter counting argument shows that the minimax lower bound is of the order $O \left( \sqrt{pk} \right)$. Thus the \textsc{smart-pm} algorithm achieves the minimax lower bound provided an initializer satisfying the conditions of Theorem~\ref{thm:SMART-PM} is used. The question of obtaining such an initializer is more delicate. In the case of sparse PCA, an interesting statistical-computational trade-off exists; see for example~\cite{berthet2013optimal, ma2015sum, ma2015computational, wang2016statistical, brennan2018reducibility}. We conjecture that such a phenomenon exists in our problem as well.

Finally, we remark that while the steps of the proof of Theorem~\ref{thm:SMART-PM} are motivated by the proof of Theorem 4 in \cite{yuan2013truncated} on the analysis of truncated power method for sparse PCA, the details of our proof are very different because of our matricized low-rank structure on the eigenvectors and the involved non-commutativity issues involved. Also, note that Lemma \ref{lemma:E_low_rank_proj} qualifies the largest absolute eigenvalue of a low rank projection of noise matrix $\rho \left( \bE_{UV} \right) = O(\sqrt{k/p}) \cdot \rho \left( \bE \right)$, which gives a different rate from that of the sparse case.

\section{Experiments} \label{sec:empirical}

In this section, we show simulation results that confirm the relevance of our theoretical findings regarding \textsc{smart-pm} algorithm. The data $\{\by_1, \ldots, \by_n\}$ are generated from $\mathcal{N}(0, \bbA)$ with true covariance 
$$\bbA = \lambda_1 \bbx \bbx^\top + \bSigma_{\epsilon},$$
where $\norm{\bSigma_{\epsilon}} = \lambda_2$ is set to $1$ throughout experiments. The empirical covariance is 
$$\bA = \frac{1}{n} \sum_{i=1}^{n} \by_i \by_i^\top.$$ 
Theorem \ref{thm:SMART-PM} implies that under appropriate conditions, the estimation error $\| \bX_t - \bbX \|_F$ is proportional to $\delta$ which is an increasing function of $\rho(\bE_{UV})$ and decreasing function of eigen-gap $\triangle \lambda = \lambda_1 - \lambda_2$. For this model, we have $\rho(\bE_{UV}) = O(\sqrt{pk/n})$. Therefore, for fixed dimensionality $p^2$, the error bound is controlled by the triplet $\{n, k, \lambda_1 - \lambda_2\}$. 

In this study, we consider a setup with dimensionality $ d = p^2 = 32^2 = 1024$, and the eigenmatrix $\bbX = \mat{\bbx}$ a rank $\bar k$ uniform random matrix with $\bar k = 1$ and $\norm{\mat{\bbX}}_F = 1$. We compare the convergence trajectory, sample efficiency and effect of eigen-gap between \textsc{smart-pm} and vanilla power method. We also show how different value of $k$ affects the performance of \textsc{smart-pm}. 

\subsection{Convergence Trajectory}

Figure \ref{fig:trajectory_max_eigen_5}, \ref{fig:trajectory_max_eigen_10} and \ref{fig:trajectory_max_eigen_100} show the convergence trajectory for \textsc{smart-pm} and Power method for $n=800$, $k=2$, and different $\lambda_1 \in \{5, 10, 100\}$. The y-axis corresponds to the log of estimation error $\| \bX_t - \bbX \|_F$ and the x-axis corresponds to the number of iterations. We tried three different methods of initialization: (1) Randomly generated $\bbX_0$; (2) Rank-$k$ randomly generated $\bbX_0$; (3) Maximum eigenmatrix of $\bA$. It can be seen from Figure \ref{fig:trajectory} that all methods converge faster when the eigen-gap is larger. \textsc{smart-pm} with maximum eigenmatrix of $\bA$ as initial value (\textsc{smart-pm} 3) converges faster than the other methods. \textsc{smart-pm} with random initialization (\textsc{smart-pm} 1)  converge faster than simple power method under all settings. However, random rank-$k$ initialization (\textsc{smart-pm} 2) does not work as well as random initialization (\textsc{smart-pm} 1) perhaps due to $k$ being set at 2, while $\bar{k} =1$.
%
 
\begin{figure*}[th]  
\centering
\subfigure[$\lambda_1 = 5$, $\lambda_2=1$]{\label{fig:trajectory_max_eigen_5}\includegraphics[scale=0.687]{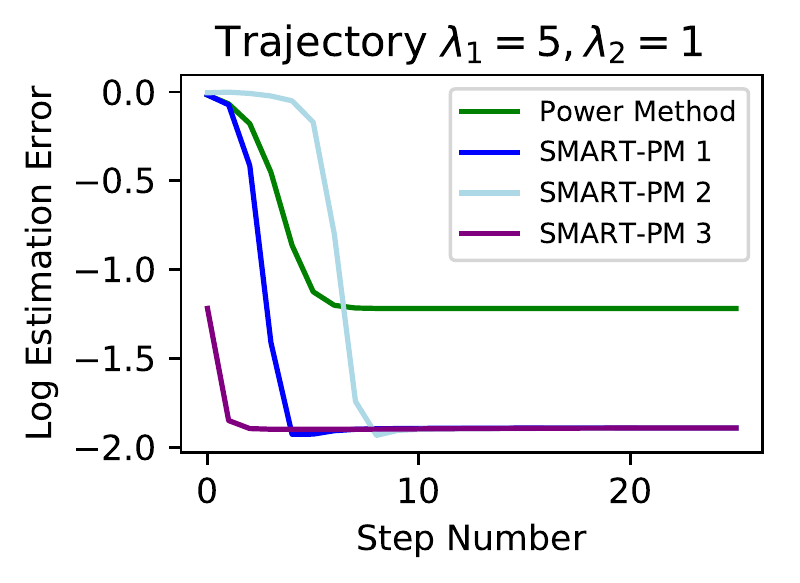}}
\subfigure[$\lambda_1 = 10$, $\lambda_2=1$]{\label{fig:trajectory_max_eigen_10}\includegraphics[scale=0.687]{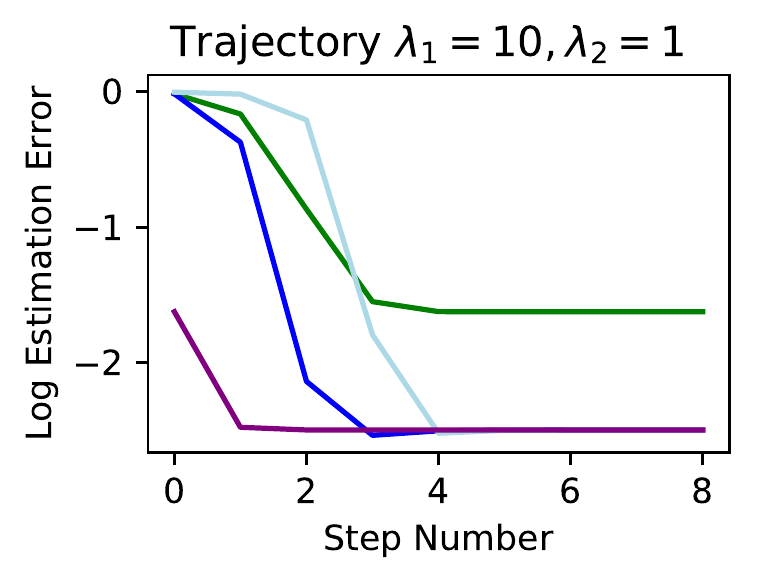}}
\subfigure[$\lambda_1 = 100$, $\lambda_2=1$]{\label{fig:trajectory_max_eigen_100}\includegraphics[scale=0.687]{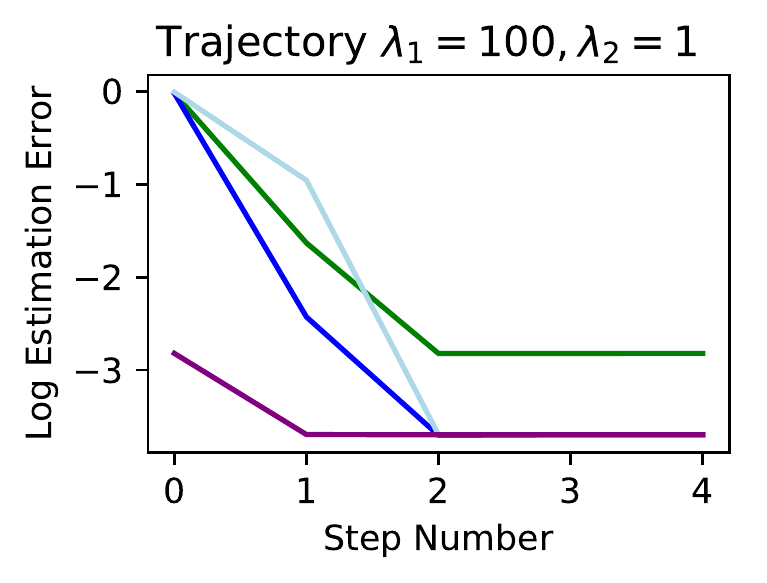}}
\caption{Convergence Trajectory of \textsc{smart-pm} and Power method. \textsc{smart-pm} 1,2,3 represent three different methods to initialize $\bX_0$: (1) Randomly generated; (2) Truncated to rank-$k$ after randomly generated; (3) Maximum eigenmatrix of $\bA$.}  \label{fig:trajectory}
\end{figure*}

\subsection{Sample Efficiency and Eigen-Gap Effect}

In this case, we work with $n \in \{100, 200, 400, 800, 1600\}$, $\lambda_1 \in \{5, 10, 100\}$ and $\lambda_2=1$ given $k = 2 \ge \bar k$. $k$ can also be tuned and selected using cross-validation as in \cite{yuan2013truncated}. For each pair $\{n, \lambda_1\}$, we generate $100$ data sets, calculate the empirical covariance matrices and employ \textsc{smart-pm} and Power method to compute a rank-$k$ eigenmatrix $\hat\bX$.  

Figure \ref{fig:max_eigen_5}, \ref{fig:max_eigen_10} and \ref{fig:max_eigen_100} shows the log of estimation curves as functions of sample size $n$ increases with $\lambda_1 \in \{5, 10, 100\}$, respectively. For fixed eigen-gap, the estimation error decreases as sample size $n$ increases. For fixed sample size, the estimation error is smaller with larger eigen-gap. Again, estimation error by \textsc{smart-pm} 3 with maximum eigenmatrix of $\bA$ as initiate $\bX_0$ has smaller mean and variance. \textsc{smart-pm} 1 with random initialization performs as well as \textsc{smart-pm} 3 under almost all setting except in the beginning of Figure \ref{fig:max_eigen_5} when both sample size and eigen-gap are very small. \textsc{smart-pm} 2 with rank-$k$ random initialization performs well when sample size or eigen-gap is large enough but may gets unstable with small eigen-gap and sample size. In all settings, \textsc{smart-pm} outperforms the Power method. 

\begin{figure*}[th] 
	\centering
	\subfigure[$\lambda_1 = 5$, $\lambda_2=1$]{\label{fig:max_eigen_5}\includegraphics[scale=0.68]{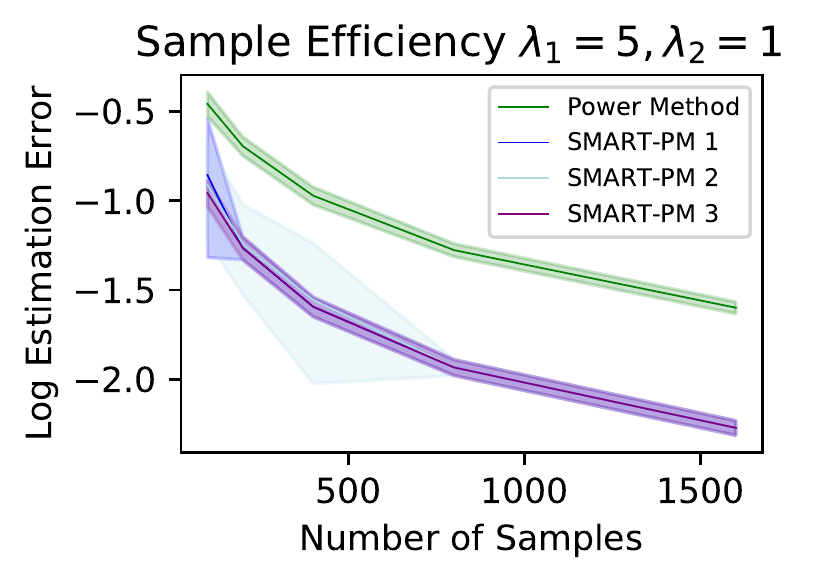}}
	\subfigure[$\lambda_1 = 10$, $\lambda_2=1$]{\label{fig:max_eigen_10}\includegraphics[scale=0.68]{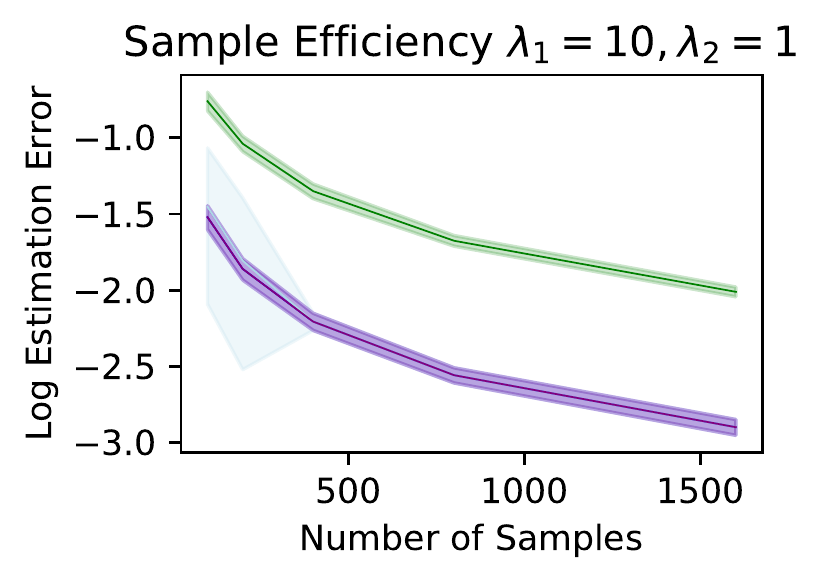}}
	\subfigure[$\lambda_1 = 100$, $\lambda_2=1$]{\label{fig:max_eigen_100}\includegraphics[scale=0.68]{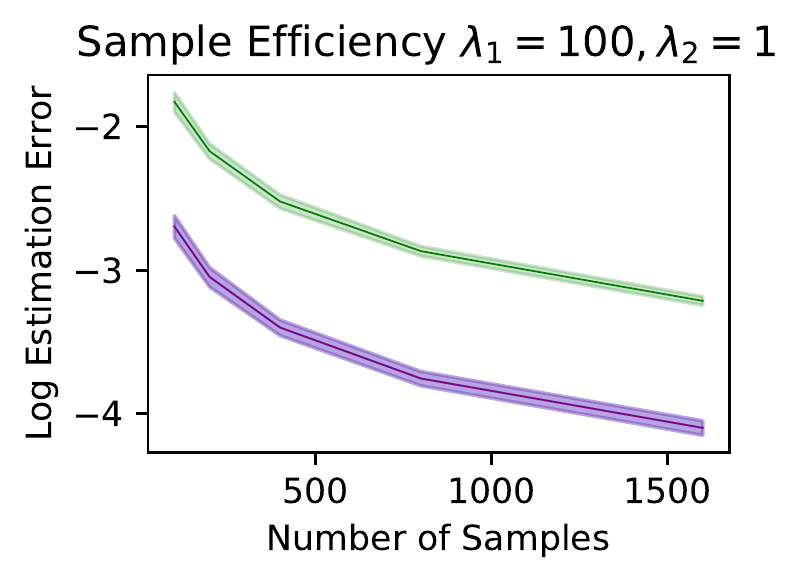}}
	\caption{Sample efficiency and eigen-gap effect of \textsc{smart-pm} and Power method from $100$ simulation runs. The solid line represents the mean error, while the shaded area around denotes the standard deviation. \textsc{smart-pm} 1,2,3 represent three different methods to initialize $\bX_0$: (1) Randomly generated; (2) Truncated to rank-$k$ after randomly generated; (3) Maximum eigenmatrix of $\bA$.} \label{fig:max_eigen}
\end{figure*}

\subsection{Varying Input Rank $k$}

In this case, we fix sample size $n=100$, $\lambda_1 = 100$ and $\lambda_2=1$, and test the values of $ k \in \{1,2,4,8,16,32\}$. We generate $100$ empirical covariance matrices and employ \textsc{smart-pm} and power method to compute a rank-$k$ eigenmatrix. From experiments in the previous sections, the random initialization of $\bX_0$ performs well. Hence, we employ the random initialization here. Recall that $p=32$ in our setting. When $k=p=32$, \textsc{smart-pm} degenerates to the power method. Figure \ref{fig:k_study_100} shows the log of estimation error curve as function of $k$. The power method is not relevant to $k$ so the corresponding curve is a horizontal line. The error curve of \textsc{smart-pm} stays below while approaching that of the power method as $k$ increases. Also, the variance of the error becomes larger as input rank $k$ increases. Similar observations are also made for other fixed pairs $\{n, \lambda_2 - \lambda_1\}$.

\begin{figure}[t!]
	\centering	
	\includegraphics[width=0.8\linewidth, keepaspectratio]{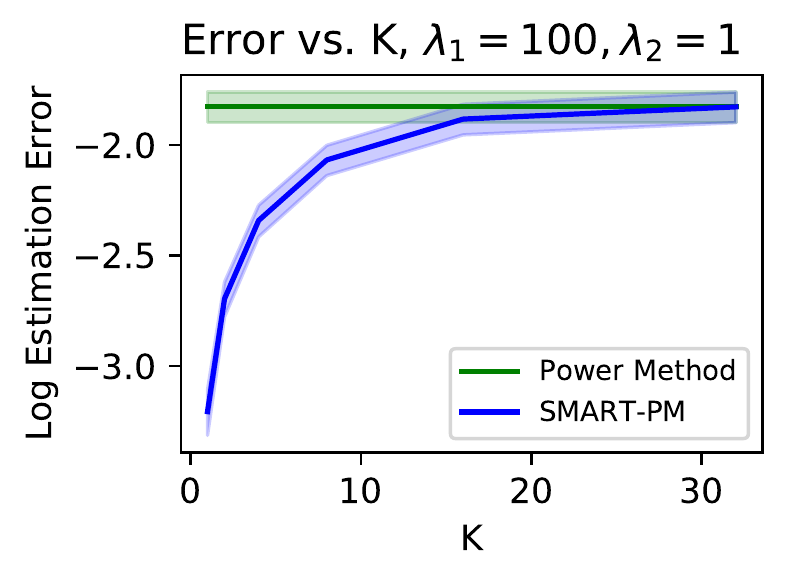}
	\caption{Effect of input rank $k$ from $100$ simulation runs. The solid line represents the mean error, while the shaded area around denotes the standard deviation. Use Randomly generated $\bX_0$ as initialization.}
	\label{fig:k_study_100}
\end{figure}

\section{Discussion} \label{sec:summary}
In this paper, we propose a novel Low-rank Principal Eigenmatrix Analysis technique for data analysis which is formulated as a non-convex optimization problem. We propose the~\textsc{smart-pc} algorithm for solving the above formulation and established its computational and statistical properties. 

Our method opens up a lot of potential directions for follow-up work. First, the question of obtaining an initializer satisfying the condition of Theorem 1 or proving the nonexistence of a obtaining such an initializer in polynomial time is extremely interesting. It is also interesting to develop  convex relaxations for the proposed non-convex problem that works provably well. Next, analyzing the general permuted low-rank eigenmatrix model, discussed in~\S\ref{sec:model} is challenging.  Finally, note that in this work we only considered matricization of the top-eigenvector. One could potentially also consider tensorizations of the top-eigenvector and enforce structures on that. Indeed, we also noted empirically that the eigenvectors when tensorized also have a certain low-rank property. We plan to investigate these directions for future work.

\clearpage
\bibliographystyle{\mybibsty}
\bibliography{\mybibfile}

\clearpage

\begin{appendices}
\section{Proof of Theorem \ref{thm:SMART-PM}} \label{appdx:proof_SMART-PM}

This section provides detailed proof of Theorem \ref{thm:SMART-PM}. It is carried out in the following steps: Lemma \ref{lemma:perturbation_rank_truncated} establishes the perturbation theory of symmetric eigenvalue problem for rank truncated $\bA_{UV} = \bbA_{UV} + \bE_{UV}$, i.e. bounds $\norm{\bx_1(\bA_{UV}) - \bbx}_{\ell_2}$, where $\bbA_{UV} = \lambda_{1} \bbx \bbx^\top$ and $\bx_1(\bA_{UV})$ is the eigenvector corresponding to the largest eigenvalue of $\bA_{UV}$. Lemma \ref{lemma:convg_of_power_method} conducts convergence analysis of traditional power method. Lemma \ref{lemma:rank_trunc_err} quantifies the error introduced by the rank-truncation step in the \textsc{smart-pm}, i.e. $|\langle RankTrunc(\bX'_t, k),  \bbX \rangle| - |\langle \bX', \bbX \rangle |$. Lemma \ref{lemma:MatRankTrun_iter_better} establishes that each step of the \textsc{smart-pm} improves eigenvector estimation, i.e. $\sqrt{1 - |\langle \bX_t,  \bbX \rangle|}$ decreases geometrically with factor $\mu$. In the proof, we denote the pair of bold-faced $(\bx_t, \bX_t)$ as the pair of vectorization and matricization and use frequently the fact that $\bx_t^\top \bbx = \langle \bX_t, \bbX \rangle$.

\subsection{The Perturbation Theorem of  Rank-Truncated Symmetric Eigenvalue Problem}

In this section, main lemma \ref{lemma:perturbation_rank_truncated} states the perturbation theory of symmetric eigenvalue problem for rank truncated $\bA = \bbA + \bE$. 
\begin{lemma}  \label{lemma:perturbation_rank_truncated}
	Consider projection $P_U$ and $P_V$ with $P_U \bbX P_V = \bbX$ where $\rank{P_U} = \rank{P_V} = k$ and $1 \le k \le p$. Under Assumption \ref{assump:matricized_low_rank} and \ref{assump:E_no_matricized_low_rank}, if $\kappa \coloneqq \triangle \lambda / \rho(\bE_{UV}) > 2$, then the ratio of the second largest (in absolute value) to the largest eigenvalue of matrix $\bA_{UV} $ is no more than 
	\begin{align*} 
	\gamma \coloneqq \frac{\lambda - \triangle \lambda + \rho \left( \bE_{UV} \right)}{\lambda - \rho \left( \bE_{UV} \right)}.
	\end{align*}
	Moreover,
	\begin{align*}  
	\norm{\bx_1(\bA_{UV}) - \bbx}_{\ell_2} \le \delta \coloneqq \frac{\sqrt{2}}{\sqrt{1 + \left(\kappa - 2 \right)^2}}.
	\end{align*}
\end{lemma}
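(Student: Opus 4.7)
The plan is to analyze $\bbA_{UV}$ directly using its special structure, transfer eigenvalue bounds to $\bA_{UV} = \bbA_{UV} + \bE_{UV}$ via Weyl's inequality, and then invoke a Davis--Kahan-type bound for the eigenvector. The key structural observation is that the hypothesis $P_U\bbX P_V = \bbX$ is equivalent, under the vec--Kronecker identity $\vect{P_U\bbX P_V} = P_{V\otimes U}\,\bbx$, to $P\bbx = \bbx$, where I write $P := P_{V\otimes U}$ for brevity. This immediately yields $\bbA_{UV}\bbx = P\bbA P\bbx = P\bbA\bbx = \lambda\,P\bbx = \lambda\bbx$, so $\lambda$ is an eigenvalue of $\bbA_{UV}$ with eigenvector $\bbx$. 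For any unit vector $\bv$ in the range of $P$ orthogonal to $\bbx$, the identity $\bv^\top\bbA_{UV}\bv = \bv^\top\bbA\bv$ combined with the spectral expansion of $\bbA$ and Assumption~\ref{assump:matricized_low_rank} gives $|\bv^\top\bbA_{UV}\bv|\le \lambda - \triangle\lambda$. Hence, viewed as an operator on the range of $P$, $\bbA_{UV}$ has top eigenvalue $\lambda$ with eigenvector $\bbx$, and all other eigenvalues bounded in absolute value by $\lambda - \triangle\lambda$.

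Given this structure, Weyl's inequality applied to $\bA_{UV} = \bbA_{UV} + \bE_{UV}$ yields $\lambda_1(\bA_{UV}) \ge \lambda - \rho(\bE_{UV})$ and $\max_{j\ge 2}|\lambda_j(\bA_{UV})|\le \lambda - \triangle\lambda + \rho(\bE_{UV})$. The assumption $\kappa > 2$ keeps the denominator positive and makes the top eigenvalue strictly isolated. Taking the ratio of these two bounds recovers the $\gamma$ in the lemma. For the eigenvector, I would then apply the Davis--Kahan $\sin\Theta$ theorem to the singleton eigenvalue $\lambda$ of $\bbA_{UV}$ and to the remaining eigenvalues of $\bA_{UV}$: since by the Weyl estimate these two spectral sets are separated by at least $\triangle\lambda - \rho(\bE_{UV})$, one obtains $\sin\theta \le \rho(\bE_{UV})/(\triangle\lambda - \rho(\bE_{UV})) = 1/(\kappa - 1)$, where $\theta$ denotes the angle between $\bx_1(\bA_{UV})$ and $\bbx$.

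The final step converts this angle bound into the claimed $\ell_2$ bound. Fixing signs so that $\bx_1(\bA_{UV})^\top\bbx\ge 0$ (hence $\theta\in[0,\pi/2]$), the chord identity $\norm{\bx_1(\bA_{UV}) - \bbx}_{\ell_2} = 2\sin(\theta/2)$ combined with the inequality $2\sin(\theta/2)\le \sqrt{2}\sin\theta$ (valid exactly for $\theta\in[0,\pi/2]$) gives $\norm{\bx_1(\bA_{UV})-\bbx}_{\ell_2}\le \sqrt{2}/(\kappa-1)$. The elementary algebraic inequality $(\kappa-1)^2\ge 1 + (\kappa-2)^2$, which holds for all $\kappa\ge 2$, then upgrades this to $\sqrt{2}/\sqrt{1+(\kappa-2)^2} = \delta$. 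The main obstacle I expect is nailing down the precise constant in $\delta$: a purely variational approach starting from the Rayleigh comparison $\hat\bx^\top\bA_{UV}\hat\bx \ge \bbx^\top\bA_{UV}\bbx$ with the decomposition $\hat\bx = \cos\theta\,\bbx + \sin\theta\,\bu$ only produces $\tan\theta\le 2/(\kappa-2)$, which is strictly weaker in the relevant regime, so routing through Davis--Kahan is what delivers the sharper $1/(\kappa-1)$ bound required to close the gap.
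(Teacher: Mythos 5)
Your proof is correct, and the eigenvalue half (Weyl's inequality applied to $\bA_{UV}=\bbA_{UV}+\bE_{UV}$, yielding $\lambda_1(\bA_{UV})\ge\lambda-\rho(\bE_{UV})$ and $\lambda_j(\bA_{UV})\le\lambda-\triangle\lambda+\rho(\bE_{UV})$ for $j\ge2$, hence the ratio $\gamma$) coincides with the paper's. You diverge on the eigenvector bound. The paper does not use Davis--Kahan: it writes $\bx_1(\bA_{UV})=\alpha\bbx+\beta\bz$ with $\bz\perp\bbx$, projects the eigen-equation onto $\bz$, uses $\bz^\top\bbA_{UV}\bbx=0$ to replace $\bz^\top\bA_{UV}\bbx$ by $\bz^\top\bE_{UV}\bbx$, and obtains $|\beta|\le|\alpha|\,\rho(\bE_{UV})/(\triangle\lambda-2\rho(\bE_{UV}))$, i.e.\ $\tan\theta\le 1/(\kappa-2)$; converting via $\norm{\bx_1(\bA_{UV})-\bbx}_{\ell_2}^2=2-2\alpha\le 2t^2/(1+t^2)$ gives exactly $\delta$. (This is essentially a hand-rolled Davis--Kahan for a simple eigenvalue, not the Rayleigh-quotient variational argument you dismiss at the end --- so the paper does not suffer the $2/(\kappa-2)$ loss you anticipated.) Your route instead applies the mixed $\sin\Theta$ theorem with gap $\triangle\lambda-\rho(\bE_{UV})$ to get $\sin\theta\le 1/(\kappa-1)$, which is strictly sharper than the paper's intermediate $\sin\theta\le 1/\sqrt{1+(\kappa-2)^2}$, and then recovers the stated $\delta$ through the chord identity and the inequality $(\kappa-1)^2\ge 1+(\kappa-2)^2$ (valid precisely for $\kappa\ge 2$, which your hypothesis $\kappa>2$ guarantees). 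What your approach buys is a cleaner black-box argument and a marginally better constant; what it costs is the need to verify the structural facts you state up front --- that $P_{V\otimes U}\bbx=\bbx$ makes $\bbx$ a $\lambda$-eigenvector of $\bbA_{UV}$ and that the rest of the spectrum of $\bbA_{UV}$ (including the zero eigenvalues on the complement of the range of $P_{V\otimes U}$) sits below $\lambda-\triangle\lambda$ --- which you do, and in fact more carefully than the paper, which implicitly treats $\bbA_{UV}$ as $\lambda\bbx\bbx^\top$.
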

\begin{proof}
	Define $\bA_{UV} \coloneqq P_{V \otimes U} \bA P_{V \otimes U}$, $\bbA_{UV} \coloneqq P_{V \otimes U} \bbA P_{V \otimes U}$, and $\bE_{UV} \coloneqq P_{V \otimes U} \bE P_{V \otimes U}$. Using Weyl's inequality, we obtain
	\begin{align*}
	&\lambda_1(\bA_{UV}) \ge \lambda_1(\bbA_{UV}) + \lambda_p(\bE_{UV}) \ge \lambda - \rho(\bE_{UV}) 
	\end{align*}
	and $\forall j \ge 2$, 
	\begin{align*}
	\lambda_j(\bA_{UV}) &\le \lambda_j(\bbA_{UV}) +\lambda_1(\bE_{UV}) \\
	&   \le \lambda - \triangle \lambda + \lambda_{1}(\bE_{UV}) \\ &\le \lambda - \triangle \lambda + \rho(\bE_{UV}).
	\end{align*}
	Then, 
	\begin{align*}
	\frac{\lambda_j(\bA_{UV})}{\lambda_1(\bA_{UV})} \le \frac{\lambda - \triangle \lambda + \rho \left( \bE_{UV} \right)}{\lambda - \rho \left( \bE_{UV} \right)}.
	\end{align*}
	Now write the eigenvector w.r.t the largest eigenvalue $\lambda_1(\bA_{UV})$ as $\bx_1(\bA_{UV}) = \alpha \bbx + \beta \bz$, where $\norm{\bbx}_{\ell_2} = \norm{\bz}_{\ell_2} = 1$, $\bbx^\top \bz = 0$ and $\alpha^2 + \beta^2 = 1$. This implies that
	$$\bA_{UV} \left( \alpha \bbx + \beta \bz \right) = \lambda_1(\bA_{UV}) \left( \alpha \bbx + \beta \bz \right).$$
	Then right multiplying the above equation by $\bz^\top$, we have
	\begin{equation*}
	\alpha \bz^\top \bA_{UV} \bbx + \beta \bz^\top \bA_{UV} \bz = \lambda_1(\bA_{UV}) \beta.
	\end{equation*}
	This leads to,
	\begin{align*}
	|\beta| & = \left| \frac{\alpha \bz^\top \bA_{UV} \bbx}{\lambda_1(\bA_{UV}) - \bz^\top \bA_{UV} \bz} \right| \\
	& \le | \alpha | \left| \frac{ \bz^\top \bA_{UV} \bbx }{\lambda_1(\bA_{UV}) - \bz^\top \bA_{UV} \bz} \right| \\
	& = | \alpha | \left| \frac{ \bz^\top \bE_{UV} \bbx }{\lambda_1(\bA_{UV}) - \bz^\top \bA_{UV} \bz} \right| \\
	& \le | \alpha | t,
	\end{align*}
	where $t \coloneqq {\rho \left( \bE_{UV} \right)}/{(\triangle \lambda - 2\rho \left( \bE_{UV} \right))}$ under the condition that $\triangle \lambda > 2 \rho \left( \bE_{UV} \right)$.
	Then we have $1 = \alpha^2 + \beta^2 \le \alpha^2 (1 + t^2)$ and $\alpha^2 \ge \frac{1}{1 + t^2}$. Without loss of generality, we may assume that $\alpha > 0$ because otherwise we can replace $\bbx$ with $- \bbx$. 
	It follows that 
	\begin{align*}
	\norm{\bx_1(\bA_{UV}) - \bbx}^2_{\ell_2} & = 2 - 2 \bx_1(\bA_{UV})^\top \bbx = 2 - 2 \alpha \\
	& \le 2 \frac{\sqrt{1 + t^2} - 1}{\sqrt{1 + t^2}}  \le \frac{2 t^2}{1 + t^2}
	\end{align*}
	We have
	\begin{equation}
	\norm{\bx_1(\bA_{UV}) - \bbx}_{\ell_2} \le \frac{\sqrt{2}}{\sqrt{1 + \left(\kappa - 2 \right)^2}} 
	\end{equation}
\end{proof}

\subsection{The Error Analysis of SMART-PM}
The following lemma from Lemma 11 in \cite{yuan2013truncated} conducts convergence analysis of traditional power method. We restate it here for completeness. 

\begin{lemma} \label{lemma:convg_of_power_method}
	Let $\bbx$ be the eigenvector with the largest (in absolute value) eigenvalue of a symmetric matrix $\bA$, and let $\gamma < 1$ be the ratio of the second largest to largest eigenvalue in absolute values. Given any $\by$ such that $\norm{\by}_{\ell_2}=1$ and $\bbx^\top \by > 0$; Let $\by' = \bA \by / \norm{\bA \by}_{\ell_2}$, then
	\begin{equation*}
	|\bbx^\top \by'| \ge |\bbx^\top \by| [1 + (1-\gamma^2)(1-(\bbx^\top \by)^2)/2].
	\end{equation*}
\end{lemma}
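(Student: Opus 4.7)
The plan is a direct calculation exploiting the spectral decomposition of the symmetric matrix $\bA$. First, I will decompose $\by = \alpha \bbx + \beta \bz$ with $\bz$ orthogonal to $\bbx$, $\norm{\bz}_{\ell_2}=1$, $\alpha = \bbx^\top \by > 0$, and $\alpha^2+\beta^2 = 1$. Symmetry of $\bA$ together with $\bbx$ being a top eigenvector forces the orthogonal complement of $\bbx$ to be $\bA$-invariant, which gives two clean facts that drive the whole argument: $\bbx^\top(\bA\bz)=\lambda_1 \bbx^\top \bz=0$, and $\norm{\bA\bz}_{\ell_2}\le \gamma\,|\lambda_1|$ by the very definition of $\gamma$ as the ratio of the second-largest to largest absolute eigenvalue.

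Next, I will write $\bA\by = \alpha\lambda_1 \bbx + \beta\, \bA\bz$ as an orthogonal decomposition and read off the numerator and denominator of $\bbx^\top \by'$. The Pythagorean theorem gives $\norm{\bA \by}_{\ell_2}^2 \le \lambda_1^2(\alpha^2 + \gamma^2 \beta^2)$, while $\bbx^\top(\bA\by)=\lambda_1 \alpha$. Normalizing by $\norm{\bA\by}_{\ell_2}$ yields
$$|\bbx^\top \by'| \;\ge\; \frac{\alpha}{\sqrt{\alpha^2+\gamma^2\beta^2}}.$$

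The remaining step is a short scalar inequality. Setting $c \coloneqq \tfrac12(1-\gamma^2)\beta^2 \ge 0$, one has $\alpha^2+\gamma^2\beta^2 = 1-2c$, so the claimed bound is equivalent to
$$\frac{1}{\sqrt{1-2c}} \;\ge\; 1+c,$$
and multiplying through and squaring reduces this to $(1-2c)(1+c)^2 \le 1$, i.e.\ $3c^2+2c^3\ge 0$, which is immediate. Substituting back $\beta^2 = 1-(\bbx^\top\by)^2$ recovers the stated form.

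I do not anticipate a genuine obstacle; the only minor subtlety is that $\gamma$ is defined via absolute values, so signs of individual eigenvalues are irrelevant and the whole computation proceeds through $|\lambda_1|$ and the operator norm of $\bA$ restricted to the orthogonal complement of $\bbx$. The proof thus collapses to a single orthogonal splitting plus a one-line algebraic verification.
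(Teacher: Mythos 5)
Your proof is correct: the orthogonal splitting $\by=\alpha\bbx+\beta\bz$, the invariance of $\bbx^{\perp}$ under the symmetric matrix $\bA$ giving $\bbx^\top\bA\bz=0$ and $\norm{\bA\bz}_{\ell_2}\le\gamma|\lambda_1|$, and the reduction to $(1-2c)(1+c)^2\le 1$ with $c=\tfrac12(1-\gamma^2)\beta^2\in[0,\tfrac12)$ are all sound (note $1-2c>0$ since $\alpha>0$). The paper does not prove this lemma at all --- it imports it verbatim as Lemma 11 of \cite{yuan2013truncated} --- and your argument is the standard spectral-decomposition proof of that result, so there is no divergence to report.
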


The following lemma quantifies the error introduced by the rank-truncation step in \textsc{smart-pm} algorithm.

\begin{lemma} \label{lemma:rank_trunc_err}
	Consider $\bar\bx$ with $\rank{\mat{\bbx}} = \bar k$. For $\by \in \mathbb{R}^{p^2}$ and $1 \le k < p$, let the singular value decomposition of $\bY = \mat{\by}$ be $\bU \bD \bV^T$ where the diagonal of $\bD$ contains singular values in (absolute) non-increasing order. Define Matricized Rank-Truncation operator $RankTrunc(\mat{\by}, k) = P_{U_k} \bY  P_{V_k}$ and $RankTrunc(\by, k) = \vect{RankTrunc(\mat{\by}, k)}$, where $P_{U_k}$ and $P_{V_k}$ are projection matrix onto the first k columns of $\bU$ and $\bV$,respectively. If $\|\bar\bx\|_{\ell_2} = \|\by\|_{\ell_2} = 1$, then
	\begin{align*}
	&|RankTrunc(\by, k)^\top \bar\bx| \ge | \by^\top \bar\bx |\\
	&- (\bar k/k)^{-1/2} \min \left[\sqrt{1-(\by^\top \bar\bx)^2}, (1+(\bar k / k)^{1/2}) \left(1-(\by^\top \bar\bx)^2 \right) \right].
	\end{align*}
\end{lemma}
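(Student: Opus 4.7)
The plan is to recast the statement in matrix form and adapt the sparse-truncation proof (Lemma 12 of \cite{yuan2013truncated}) to the rank-truncation setting via operator-theoretic substitutes. Set $\bY := \mat{\by}$, $\bar\bX := \mat{\bar\bx}$, and $a := \by^\top \bar\bx = \langle \bY,\bar\bX\rangle_F$. Replacing $\bar\bx$ by $-\bar\bx$ if necessary, I may assume $a \ge 0$. Writing $\bY_k := P_{U_k}\bY P_{V_k}$ for the best rank-$k$ Frobenius approximant of $\bY$ and $\bY_{-k} := \bY - \bY_k$, the claim reduces to an upper bound on $|\langle \bY_{-k},\bar\bX\rangle_F|$ matching the quantity inside the \textbf{min}. (I read the displayed exponent $-1/2$ as a typo for $+1/2$, since otherwise the right-hand side is vacuously loose.)

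The first reduction exploits the low rank of $\bar\bX$. Using $\bar\bX = P_{\bar U}\bar\bX P_{\bar V}$, where $P_{\bar U}, P_{\bar V}$ are the rank-$\bar k$ projections onto the left and right singular subspaces of $\bar\bX$, Cauchy--Schwarz in the Frobenius inner product yields
\[
|\langle \bY_{-k},\bar\bX\rangle_F| \;=\; |\langle P_{\bar U}\bY_{-k}P_{\bar V},\bar\bX\rangle_F| \;\le\; \|P_{\bar U}\bY_{-k}P_{\bar V}\|_F \;\le\; \sqrt{\bar k}\,\sigma_{k+1}(\bY),
\]
the final step because the projected matrix has rank at most $\bar k$. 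Two complementary estimates on $\sigma_{k+1}(\bY)$ feed the rest of the proof: (i) the averaging bound $k\sigma_k(\bY)^2 \le \|\bY_k\|_F^2 \le 1$, which gives $\sigma_{k+1} \le 1/\sqrt{k}$; and (ii) the Eckart--Young optimality of $\bY_k$ against the rank-$\bar k$ candidate $a\bar\bX$ (valid when $k \ge \bar k$; the other case makes the asserted bound trivial), which gives $\|\bY_{-k}\|_F^2 \le \|\bY - a\bar\bX\|_F^2 = 1-a^2$ and hence $\sigma_{k+1}\le \sqrt{1-a^2}$.

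To produce the first summand $\sqrt{\bar k/k}\sqrt{1-a^2}$ inside the min, neither estimate suffices in isolation. My plan is to sharpen the Cauchy--Schwarz step above by routing through von Neumann's trace inequality,
\[
|\langle \bY_{-k},\bar\bX\rangle_F| \;\le\; \sum_{i=1}^{\bar k}\bar\sigma_i\,\sigma_{k+i}(\bY),
\]
where $\bar\sigma_i$ are the singular values of $\bar\bX$ (so $\sum_i \bar\sigma_i^2 = 1$), and then applying Cauchy--Schwarz in $i$ to reduce to bounding $\sum_{i=1}^{\bar k}\sigma_{k+i}(\bY)^2$. Here the pair of estimates $\sigma_{k+j}(\bY)^2 \le \|\bY_k\|_F^2/k$ (averaging and monotonicity) and $\sum_{i=k+1}^{k+\bar k}\sigma_i(\bY)^2 \le \sum_{i=\bar k+1}^{2\bar k}\sigma_i(\bY)^2 \le 1-a^2$ (monotonicity plus Eckart--Young at level $\bar k$) combine in a geometric-mean fashion to yield the desired $\sqrt{\bar k/k}\sqrt{1-a^2}$ factor.

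The main obstacle is the sharper second summand $\sqrt{\bar k/k}(1+\sqrt{\bar k/k})(1-a^2)$, which is quadratic in $\sqrt{1-a^2}$ and governs the regime $a\to 1$. My plan is to decompose $\bY = a\bar\bX + \bZ$ with $\langle \bZ,\bar\bX\rangle_F = 0$ and $\|\bZ\|_F^2 = 1-a^2$, then analyze $\bY_k - a\bar\bX$: the triangle inequality together with Eckart--Young gives $\|\bY_k - a\bar\bX\|_F \le 2\sqrt{1-a^2}$, and splitting $\bY_k - a\bar\bX$ into its part inside the $(\bar U,\bar V)$ subspaces of $\bar\bX$ (which contributes quadratically in $\|\bZ\|_F$ thanks to the Frobenius-orthogonality $\bZ \perp \bar\bX$) and its orthogonal complement (controlled by the rank-projection factor $\sqrt{\bar k/k}$) produces, upon triangle recombination, the claimed overhead $1+\sqrt{\bar k/k}$. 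Transcribing the sparse-case decomposition of $\bar\bx$ along the on-support and off-support pieces $S\cap S'$ and $S\cap S'^c$ into this operator-theoretic language is the delicate part, as the non-commutativity of left and right projections requires some extra care beyond the coordinate-wise sparse argument.
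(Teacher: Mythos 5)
Your reading of the displayed exponent as a typo for $+1/2$ is correct (the paper's own proof derives the prefactor $(\bar k/k)^{1/2}$, and that is what Lemma~\ref{lemma:MatRankTrun_iter_better} consumes). Your architecture is also genuinely different from the paper's: the paper never touches the singular values of $\bY$ directly, but instead splits $\mathbb{R}^{p^2}$ via the four products of the Kronecker projections $P_{V_k\otimes U_k}$ and $P_{P_{\bar k}\otimes Q_{\bar k}}$ and their complements (the operator analogue of the on/off-support split in the sparse case), and the whole lemma reduces to Cauchy--Schwarz on $\by^\top P_1\bar\bx$ together with the energy-per-dimension comparison $\norm{P_3\by}^2_{\ell_2}/k_3\ge\norm{P_1\by}^2_{\ell_2}/k_1$. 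Your Eckart--Young/von Neumann route is attractive, but it has a genuine gap at its central step. After von Neumann and Cauchy--Schwarz you need
\[
T \coloneqq \sum_{i=k+1}^{k+\bar k}\sigma_i(\bY)^2 \;\le\; \frac{\bar k}{k}\,(1-a^2),
\]
i.e.\ the \emph{product} of your two individual bounds $T\le \bar k/k$ and $T\le 1-a^2$. ``Combining in a geometric-mean fashion'' does not deliver this: from $T\le A$ and $T\le B$ one only gets $T\le\min(A,B)\le\sqrt{AB}$, hence $\sqrt{T}\le\bigl((\bar k/k)(1-a^2)\bigr)^{1/4}$, strictly weaker than the required $\sqrt{\bar k/k}\,\sqrt{1-a^2}$; the product of two upper bounds is not itself an upper bound. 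The product bound is in fact true, but it needs an interlacing argument you have not supplied: for $\bar k<i\le k$ one has $\sigma_i(\bY)^2\ge\sigma_{k+1}(\bY)^2\ge T/\bar k$, so $\sum_{i=\bar k+1}^{k+\bar k}\sigma_i(\bY)^2\ge (k-\bar k)T/\bar k+T=(k/\bar k)T$, and combining with $\sum_{i>\bar k}\sigma_i(\bY)^2\le 1-a^2$ (Eckart--Young at level $\bar k$ against $a\bar\bX$) yields $T\le(\bar k/k)(1-a^2)$.

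The second branch of the $\min$ is also only gestured at. The overhead $(1+\sqrt{\bar k/k})(1-a^2)$ requires a quantitative statement that the component of $\bar\bx$ lying in the discarded subspace is small, namely $\norm{P_1\bar\bx}_{\ell_2}\le\bigl(1+(\bar k/k)^{1/2}\bigr)\sqrt{1-a^2}$, which the paper then pairs with $\norm{P_1\by}_{\ell_2}\le(\bar k/k)^{1/2}\sqrt{1-a^2}$ via Cauchy--Schwarz. Your proposed decomposition of $\bY_k-a\bar\bX$ by triangle inequality and a split along the $(\bar U,\bar V)$ subspaces does not yet produce this $\sin\Theta$-type control --- you flag it yourself as the delicate part --- and without it the quadratic-in-$\sqrt{1-a^2}$ branch, which is the one that actually drives the geometric convergence in Lemma~\ref{lemma:MatRankTrun_iter_better} as $a\to1$, is unproven. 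Until both steps are written out, the argument is incomplete, though I see no obstruction to completing it along the lines above.
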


\begin{proof}
	Define $\bY = \mat{\by}$, $\by = \vect{\bY}$ and similar for $\bar\bx$ and $\bar \bX$. Suppose the SVD decomposition of $\bY$ ($\bar \bX$) assume the form $\bU \bD \bV^\top$ ($\bP \bLambda \bQ^\top$) where $\bU$, $\bV$, $\bP$ and $\bQ$ are orthonormal matrices and $\bD = \diag{d_1, \ldots, d_p}$ and $\bLambda = \diag{\lambda_1, \ldots, \lambda_k^\star}$ with singular values in non-increasing order. For simplicy, we work with non-negative singular value because otherwise we may just change the sign of one singular vector. Then,  
	\begin{align*}
	\bY  = \sum_{i=1}^{p} d_i \, \bu_i \bv_i^\top, \quad& \by = \sum_{i=1}^{p} d_i \, \bv_i \otimes \bu_i, \\
	\bar \bX  = \sum_{i=1}^{\bar k} \lambda_i \, \bp_i \bq_i^\top, \quad & \bar\bx  = \sum_{i=1}^{\bar k} \lambda_i \, \bp_i \otimes \bq_i.
	\end{align*}
We also have,
	\begin{align*}
	RankTrunc(\bY, k) & = \sum_{i=1}^{k} d_i \bu_i \bv_i^\top = P_{U_k} \bY P_{V_k}^\top,\\
	RankTrunc(\by, k) & = \sum_{i=1}^{k} d_i \bv_i \otimes \bu_i = P_{V_k \otimes U_k} \by, \\
	RankTrunc(\by, k) & = \vect{RankTrunc(\bY, k)}, \\
	RankTrunc(\bY, k) & = \mat{RankTrunc(\by, k)}.
	\end{align*}
	Define $P_1 = P_{P_{\bar k} \otimes Q_{\bar k}} P_{V_k \otimes U_k}^\perp$, $P_2 = P_{P_{\bar k} \otimes Q_{\bar k}} P_{V_k \otimes U_k}$, $P_3 = P_{P_{\bar k} \otimes Q_{\bar k}}^\perp P_{V_k \otimes U_k}$ and $P_4 = P_{P_{\bar k} \otimes Q_{\bar k}}^\perp P_{V_k \otimes U_k}^\perp$. Let $k_i = \rank{P_i}$. Since $\|\bar\bx\|_{\ell_2} = \|\by\|_{\ell_2}= 1$ and $\rank{\bar\bx} = \bar k$, we have $\sum_{i=1}^{4} \norm{ P_i \by }^2_{\ell_2} = 1 $ and $\sum_{i=1}^{2} \norm{ P_i \bx }^2_{\ell_2} = 1 $. Then 
	\begin{align*}
	(\by^\top \bar\bx)^2 & = \left( \sum_{i=1}^{2} \by^\top P_i \bar\bx \right)^2 \\
	&\le \left( \norm{ P_1 \by }_{\ell_2} \norm{ P_1 \bar\bx }_{\ell_2} + \norm{ P_2 \by }_{\ell_2} \norm{ P_2 \bar\bx }_{\ell_2} \right)^2 \\
	&\le \norm{ P_1 \by }^2_{\ell_2} \norm{ P_1 \bar\bx }^2_{\ell_2} + \norm{ P_2 \by }^2_{\ell_2} \norm{ P_2 \bar\bx }^2_{\ell_2} \\ 
	& \quad + 2 \norm{ P_1 \by }_{\ell_2} \norm{ P_1 \bar\bx }_{\ell_2} \norm{ P_2 \by }_{\ell_2} \norm{ P_2 \bar\bx }_{\ell_2}  \\
	&\le \norm{ P_1 \by }^2_{\ell_2} + \norm{ P_2 \by }^2_{\ell_2} \\
	&\le 1 - \norm{ P_3 \by }^2_{\ell_2} \\
	&\le 1 - k_3/k_1 \norm{ P_1 \by }^2_{\ell_2}, 
	\end{align*}
	where the first equation follows from the fact that $P_i^\top P_j = 0$ for $i \ne j$ and $P_3 \bar\bx = P_4 \bar\bx = 0$. The first inequality follows from Cauchy-Schwarz inequality and the third inequality follow from $2 \norm{ P_1 \by }_{\ell_2} \norm{ P_1 \bar\bx }_{\ell_2} \norm{ P_2 \by }_{\ell_2} \norm{ P_2 \bar\bx }_{\ell_2} \le \norm{ P_1 \by }_{\ell_2}^2 \norm{ P_2 \bar\bx }^2_{\ell_2} + \norm{ P_2 \by }^2_{\ell_2} \norm{ P_1 \bar\bx }^2_{\ell_2}$ and $\norm{ P_1 \bar\bx }^2_{\ell_2} +\norm{ P_2 \bar\bx }^2_{\ell_2} = 1$. The last inequality comes from $\norm{P_3\by}^2_{\ell_2} / k_3 \ge \norm{P_1\by}^2_{\ell_2} / k_1$, since $d_1 \ge \cdots \ge d_p$.
	
	This implies that 
	\begin{align} 
	\norm{ P_1 \by }^2_{\ell_2} & \le (k_1/k_3)(1-(\by^\top \bar\bx)^2) \nonumber \\
	& \le  (\bar k/k)(1-(\by^\top \bar\bx)^2), \label{proof:norm2sqofP1y}
	\end{align}
	where the second inequality follows from $\bar k \le k$. Then we obtain that 
	\begin{align} \label{proof:norm2sqofP1x}
	\norm{P_1 \bx}_{\ell_2} \le \min \left[1, (1+(\bar k / k)^{1/2}) \sqrt{1-(\by^\top \bar\bx)^2} \right] 
	\end{align}
	
	Finally from (\ref{proof:norm2sqofP1y}) and (\ref{proof:norm2sqofP1x}), 
	\begin{align*}
	& | \by^\top \bar\bx | - |RankTrunc(\by, k)^\top \bar\bx| \\
	& \le |(\by - RankTrunc(\by, k))^\top \bar\bx| \\
	& = |\by^\top P_{V_k \otimes U_k}^\perp  P_{P_{\bar k} \otimes Q_{\bar k}} \bar\bx| \\
	& = |\by^\top P_1 \bar\bx| \\
	& \le \norm{P_1 \by}_{\ell_2}  \norm{P_1 \bar\bx}_{\ell_2} \\
	& \le (\bar k/k)^{1/2} \min \left[\sqrt{1-(\by^\top \bar\bx)^2}, (1+(\bar k / k)^{1/2}) \left( 1-(\by^\top \bar\bx)^2 \right) \right] 
	\end{align*}
\end{proof}

Next lemma says that each step of the \textsc{smart-pm} algorithm improves eigenvector estimation. 

\begin{lemma} \label{lemma:MatRankTrun_iter_better}
	Assume that $k \ge \bar k$. If $|\langle \bX_{t-1},  \bbX \rangle| > \theta + \delta$ for $\theta \in (0,1)$, then the iterate $\bx_t$ and $\bx_{t-1}$ satisfy the following inequality, 
	\begin{equation*}
	|1 - \langle \bX_t,  \bbX \rangle| \le \mu |1 - \langle \bX_{t-1},  \bbX \rangle|  + \sqrt{10} \delta,
	\end{equation*}
	where $\gamma$ and $\delta$ are defined in (\ref{eqn:eigen_gap_A}) and (\ref{eqn:err_from_sig_noise}), respectively, and $$\mu = \sqrt{1 + 2((\bar k / k)^{1/2} + \bar k / k)} \sqrt{1- 0.5 \theta (1+\theta) (1-\gamma^2)}.$$ 
\end{lemma}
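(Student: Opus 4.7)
The plan is to split the SMART-PM iteration into its power step $\bx' = \bA\bx_{t-1}/\|\bA\bx_{t-1}\|_{\ell_2}$ and its rank-truncation step $\bx_t = \vect{RankTrunc(\mat{\bx'}, k)}$ (followed by renormalization), to bound the loss of alignment with $\bbx$ at each step separately, and then compose. I will assume WLOG that $\bbx^\top\bx_{t-1} > 0$, and read the stated inequality in its useful form $\sqrt{1 - \bbx^\top\bx_t} \le \mu\sqrt{1 - \bbx^\top\bx_{t-1}} + \sqrt{10}\,\delta$, which is the form that, upon iteration, telescopes into the conclusion of Theorem~\ref{thm:SMART-PM}. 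I will fix rank-$(k+\bar k)$ projections $P_U, P_V$ whose column and row spaces contain those of both $\bbX$ and $\bX_{t-1}$ (so that $\bbx, \bx_{t-1} \in V \otimes U$), and invoke Lemma~\ref{lemma:perturbation_rank_truncated} at this rank: up to inflating $\rho(\bE_{UV})$ by a constant factor (via Lemma~\ref{lemma:E_low_rank_proj}), I obtain a top eigenvector $\bv$ of $\bA_{UV}$ with $\|\bv - \bbx\|_{\ell_2} \le \delta$ and second-to-top eigenvalue ratio bounded by $\gamma$.

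For the power step, since both $\bbx$ and $\bx_{t-1}$ lie in $V \otimes U$, I have the identity $\bbx^\top\bA\bx_{t-1} = \bbx^\top\bA_{UV}\bx_{t-1}$ together with the orthogonal decomposition $\|\bA\bx_{t-1}\|_{\ell_2}^2 = \|\bA_{UV}\bx_{t-1}\|_{\ell_2}^2 + \|(I - P_{V\otimes U})\bA\bx_{t-1}\|_{\ell_2}^2$. These reduce the analysis of $\bbx^\top\bx'$ to that of the projected iterate $\tilde\bx' = \bA_{UV}\bx_{t-1}/\|\bA_{UV}\bx_{t-1}\|_{\ell_2}$, up to a slack controlled by the complementary component. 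Applying Lemma~\ref{lemma:convg_of_power_method} to $\bA_{UV}$ with top eigenvector $\bv$ -- noting that $\bbx^\top\bx_{t-1} \ge \theta + \delta$ and $\|\bv-\bbx\|_{\ell_2} \le \delta$ force $\bv^\top\bx_{t-1} \ge \theta$ -- together with the algebraic identity
\[
1 - a\bigl[1 + (1 - \gamma^2)(1 - a^2)/2\bigr] = (1 - a)\bigl[1 - a(1 + a)(1 - \gamma^2)/2\bigr]
\]
and the monotonicity $a(1 + a) \ge \theta(1 + \theta)$, yields $1 - \bv^\top\tilde\bx' \le (1 - 0.5\theta(1+\theta)(1-\gamma^2))(1 - \bv^\top\bx_{t-1})$. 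Two further swaps between $\bv$ and $\bbx$ (each costing a $\delta$), plus the control on the complementary component, then give
\[
\sqrt{1 - \bbx^\top\bx'} \le \sqrt{1 - 0.5\theta(1+\theta)(1-\gamma^2)}\,\sqrt{1 - \bbx^\top\bx_{t-1}} + O(\delta).
\]

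For the truncation step, Lemma~\ref{lemma:rank_trunc_err} with $\by = \bx'$ and $\bar\bx = \bbx$ (using $\bar k \le k$, the second argument of the $\min$, and $1 + \bbx^\top\bx' \le 2$) yields
\[
1 - \bbx^\top\bx_t \le (1 - \bbx^\top\bx')\bigl[1 + 2\bigl((\bar k/k)^{1/2} + \bar k/k\bigr)\bigr],
\]
whose square root supplies the remaining factor of $\mu$. Composing with the power-step bound and using $\sqrt{1 + 2((\bar k/k)^{1/2} + \bar k/k)} \le \sqrt{5}$ to absorb all accumulated constants into a single $\sqrt{10}\,\delta$ term produces the claimed inequality.

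The hard part will be the transfer of progress from $\tilde\bx'$ to $\bx'$: the algorithm iterates with $\bA$, not $\bA_{UV}$, and $\|\bA\bx_{t-1}\|_{\ell_2}$ is inflated by $\|(I - P_{V\otimes U})\bA\bx_{t-1}\|_{\ell_2}$. I plan to bound the $\bbA$-component of this residual by $\sqrt{1 - (\bbx^\top\bx_{t-1})^2}\,(\lambda - \triangle\lambda)$ (exploiting that $\bbA$ restricted to $\bbx^\perp$ has operator norm $\lambda - \triangle\lambda$) and the $\bE$-component by a bound on $\|\bE P_{V\otimes U}\|$ in the spirit of the proof of Lemma~\ref{lemma:E_low_rank_proj}. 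Showing that these together only contribute an $O(\delta)$ additive loss -- rather than a constant multiplicative loss -- is the technical crux, and is precisely the non-commutativity subtlety flagged by the paper as distinguishing this analysis from the sparse-PCA truncated-power method of \cite{yuan2013truncated}.
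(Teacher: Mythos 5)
Your overall architecture coincides with the paper's: the same three supporting lemmas (the rank-truncated perturbation bound of Lemma~\ref{lemma:perturbation_rank_truncated}, the power-method progress bound of Lemma~\ref{lemma:convg_of_power_method}, and the truncation-error bound of Lemma~\ref{lemma:rank_trunc_err}), composed over a power step and a truncation step, with the $\delta$-swaps between $\bx_1(\bA_{UV})$ and $\bbx$ and the final $\sqrt{5}\cdot\sqrt{2}\delta=\sqrt{10}\delta$ bookkeeping exactly as in the paper. The truncation half of your argument is essentially the paper's verbatim.

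The gap is precisely where you flag it, and your proposed resolution does not close it. The paper never has to control the residual $(I-P_{V\otimes U})\bA\bx_{t-1}$: it takes $\bU=\bU_{t-1}\cup\bU_t\cup\bbU$ and $\bV=\bV_{t-1}\cup\bV_t\cup\bbV$ --- crucially including the top-$k$ singular subspaces of the \emph{next} iterate --- and argues that with this choice the rank-truncated update computed from $\bA_{UV}\bx_{t-1}$ coincides with the one computed from $\bA\bx_{t-1}$ (truncation is scale-invariant and $P_{U_t}P_U=P_{U_t}$), so the projected and actual iterations produce the same $\bX_t$ and there is nothing to transfer. Your projections contain only the subspaces of $\bbX$ and $\bX_{t-1}$, and the residual you are left with cannot be absorbed into an $O(\delta)$ additive term: by your own estimate its $\bbA$-component has norm of order $(\lambda-\triangle\lambda)\sqrt{1-(\bbx^\top\bx_{t-1})^2}$, which is proportional to the \emph{current error}, not to $\delta$. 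Through the inflated denominator $\norm{\bA\bx_{t-1}}_{\ell_2}$ it therefore perturbs the multiplicative contraction factor by a term of order $\gamma^2\bigl(1-(\bbx^\top\bx_{t-1})^2\bigr)/\theta^2$, which for $\gamma$ near $1$ or $\theta$ small can exceed the gain $0.5\,\theta(1+\theta)(1-\gamma^2)$ supplied by Lemma~\ref{lemma:convg_of_power_method}; the stated $\mu$ would not survive. To repair the proof you should either enlarge $U,V$ as the paper does (accepting a rank up to $3k$, which by Lemma~\ref{lemma:E_low_rank_proj} only inflates $\rho(\bE_{UV})$ by a constant factor) and justify that the iteration sequence is unchanged, or carry the residual through as a multiplicative correction and re-derive the contraction factor; the claim that it contributes only an $O(\delta)$ additive loss is false as stated.
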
 

\begin{proof}
	Let $\bU = \bU_{t-1} \cup \bU_t \cup \bbU$ and $\bV = \bV_{t-1} \cup \bV_t \cup \bbV$. Consider the following vector
	\begin{equation} \label{eqn:algrm_eq}
	\by_t = \bA_{UV} \bx_{t-1} / \norm{\bA_{UV} \bx_{t-1}}_{\ell_2},
	\end{equation}
	where $\bA_{UV} \coloneqq P_{V \otimes U} \bbA P_{V \otimes U}$ denotes the matrix after project the rows and columns of $\bbA$ on the spaces spanned by $\bV \otimes \bU$, respectively. We note that replacing $\bX_t$ with $\mat{\by_t}$ in Algorithm \ref{alg:mlrpower} does not affect the output iteration sequence $\{ \bX_t \}$ because of the low-rankness of $\bX_{t-1}$ and the fact that the truncation operation is invariant to scaling. Therefore for notation simplicity, in the following proof we will simply assume that $\bX_t$ is redefined as $\bX_t = \mat{\by_t}$ according to (\ref{eqn:algrm_eq}). 
	
	Let $\bx_1(\bA_{UV})$ be the eigenvector with the largest (in absolute value) eigenvalue of $\bA_{UV}$. Without loss of generality and for simplicity, we may assume that $\by_t \bx_1(\bA_{UV}) \ge 0$ and $\by_{t-1} \bbx \ge 0$ because otherwise we can simply do appropriate sign changes in the proof. From Lemma \ref{lemma:convg_of_power_method}, we have
	\begin{align*}
	&\by_t^\top \bx_1(\bA_{UV}) \\
	& \ge \bx_{t-1}^\top \bx_1(\bA_{UV}) [1 + (1-\gamma^2)(1-(\bx_{t-1}^\top \bx_1(\bA_{UV}))^2)/2]
	\end{align*}
	
	By assumption that $|\bx_{t-1}^\top \bbx| > \theta + \delta$ and Lemma \ref{lemma:perturbation_rank_truncated}, we have
	\begin{equation*}
	\bx_{t-1}^\top \bx_1(\bA_{UV}) \ge \bx_{t-1}^\top \bbx - \delta \ge \theta
	\end{equation*}
	Thus we have
	\begin{align*}
	&1 - \by_t^\top \bx_1(\bA_{UV}) \\
	&\le \left(1- \bx_{t-1}^\top \bx_1(\bA_{UV}) \right) [1- (1-\gamma^2) \cdot \\
	& \qquad \qquad (1+\bx_{t-1}^\top \bx_1(\bA_{UV}))\bx_{t-1}^\top \bx_1(\bA_{UV})/2] \\
	& \le \left(1- \bx_{t-1}^\top \bx_1(\bA_{UV}) \right) [1- 0.5 \theta (1+\theta) (1-\gamma^2)],
	\end{align*}
	where $\gamma$ is defined in Lemma \ref{lemma:perturbation_rank_truncated}. Using the fact that $\norm{\by_t}_{\ell_2}=\norm{\bx_1(\bA_{UV})}_{\ell_2}=1$, the above result is equivalent to
	\begin{align*}
	&\norm{\by_t - \bx_1(\bA_{UV})}_{\ell_2} \\
	&\le \norm{\bx_{t-1} - \bx_1(\bA_{UV})}_{\ell_2} [1- 0.5 \theta (1+\theta) (1-\gamma^2)].
	\end{align*}
	Using Lemma \ref{lemma:perturbation_rank_truncated}, 
	\begin{align*}
	\norm{\by_t - \bbx}_{\ell_2} & \le \norm{\bx_{t-1} - \bbx}_{\ell_2} [1- 0.5 \theta (1+\theta) (1-\gamma^2)] \\
	& + \delta [2 - 0.5 \theta (1+\theta) (1-\gamma^2)] \\
	& \le \norm{\bx_{t-1} - \bbx}_{\ell_2} [1- 0.5 \theta (1+\theta) (1-\gamma^2)] + 2 \delta
	\end{align*}
	This is equivalent to 
	\begin{align*}
	&\sqrt{1 - |\by_t^\top \bbx|} \\
	&\le \sqrt{1 - |\bx_{t-1}^\top \bbx|} \sqrt{1- 0.5 \theta (1+\theta) (1-\gamma^2)} + \sqrt{2} \delta
	\end{align*}
	
	Applying Lemma \ref{lemma:rank_trunc_err} and using $k > \bar k$, 
	if $(1+(\bar k / k)^{1/2}) \sqrt{1-(\by_t^\top \bar\bx)^2} \le 1$, we have
	\begin{align*}
	&\sqrt{1 - |RankTrunc(\by_t, k)^\top \bbx|} \\
	&\le \sqrt{1 - |\by_t^\top \bbx| + ((\bar k / k)^{1/2} + \bar k / k)(1 - |\by_t^\top \bbx|^2)} \\
	&\le \sqrt{1 - |\by_t^\top \bbx|} \sqrt{1 + 2((\bar k / k)^{1/2} + \bar k / k)}\\
	&\le \mu \sqrt{1 - |\bx_{t-1}^\top \bbx|} + \sqrt{10} \delta,
	\end{align*}
	where
	\[\mu = \sqrt{1 + 2((\bar k / k)^{1/2} + \bar k / k)} \sqrt{1- 0.5 \theta (1+\theta) (1-\gamma^2)}.\] 
	
\end{proof}

\end{appendices}

\end{document}